\documentclass{article}

% if you need to pass options to natbib, use, e.g.:
     \PassOptionsToPackage{numbers, compress}{natbib}
% before loading neurips_2023

% ready for submission
\usepackage[final]{neurips_2023}

% to compile a preprint version, e.g., for submission to arXiv, add add the
% [preprint] option:
%     \usepackage[preprint]{neurips_2023}

% to compile a camera-ready version, add the [final] option, e.g.:
%     \usepackage[final]{neurips_2023}

% to avoid loading the natbib package, add option nonatbib:
%    \usepackage[nonatbib]{neurips_2023}

\usepackage[utf8]{inputenc} % allow utf-8 input
\usepackage[T1]{fontenc}    % use 8-bit T1 fonts
\usepackage{hyperref}       % hyperlinks
\usepackage{url}            % simple URL typesetting
\usepackage{booktabs}       % professional-quality tables
\usepackage{amsfonts}       % blackboard math symbols
\usepackage{nicefrac}       % compact symbols for 1/2, etc.
\usepackage{microtype}      % microtypography
\usepackage{xcolor}         % colors
\usepackage{mathtools}      % coloneqq
\usepackage{enumerate}
\usepackage{enumitem}
\usepackage{algorithm,algorithmic}
\usepackage{multirow}
\usepackage{makecell}

\usepackage{stmaryrd}

\usepackage{amssymb,amsmath,amsthm}
\theoremstyle{plain}
\newtheorem{thm}{\protect\theoremname}
\theoremstyle{plain}
\newtheorem{assumption}{\protect\assumptionname\ignorespaces}
\theoremstyle{plain}
\newtheorem{prop}{\protect\propositionname}
\theoremstyle{plain}
\newtheorem{lem}{\protect\lemmaname}

\providecommand{\assumptionname}{A}
\providecommand{\lemmaname}{Lemma}
\providecommand{\propositionname}{Proposition}
\providecommand{\theoremname}{Theorem}

\newcommand{\R}{\mathbb{R}}
\newcommand{\prob}{\mathbb{P}}
\newcommand{\E}{\mathbb{E}}

\newcommand{\cZ}{\mathcal{Z}}

\newcommand{\cD}{\mathcal{D}}
\newcommand{\cT}{\mathcal{T}}

\newcommand{\cV}{\mathcal{V}}
\newcommand{\cE}{\mathcal{E}}

\newcommand{\bE}{\mathbf{E}}
\newcommand{\bY}{\mathbf{Y}}
\newcommand{\bX}{\mathbf{X}}

\newcommand{\bV}{\mathbf{V}}
\newcommand{\bS}{\mathbf{S}}

\newcommand{\alphadata}{\hat\alpha_{\text{data}}}
\newcommand{\alphapca}{\hat\alpha_{\text{pca}}}
\newcommand{\alphacos}{\hat\alpha_{\mathrm{cos}}}

% \title{The Dot Product Dendrogram: a fast and consistent view of hierarchy in high-dimensional data}

\title{Hierarchical clustering with dot products 
recovers~hidden~tree~structure}

% \title{Recovering the conditional independence tree using
% Hierarchical clustering with dot products }

% The \author macro works with any number of authors. There are two commands
% used to separate the names and addresses of multiple authors: \And and \AND.
%
% Using \And between authors leaves it to LaTeX to determine where to break the
% lines. Using \AND forces a line break at that point. So, if LaTeX puts 3 of 4
% authors names on the first line, and the last on the second line, try using
% \AND instead of \And before the third author name.
\author{}
\author{}
\author{Nick Whiteley}

\author{%
  Annie Gray\\
  School of Mathematics\\
  University of Bristol, UK\\
\texttt{annie.gray@bristol.ac.uk}\\
  % examples of more authors
  \And
  Alexander Modell \\
  Department of Mathematics \\
  Imperial College London, UK \\
  \texttt{a.modell@imperial.ac.uk} \\
   \And
  Patrick Rubin-Delanchy \\
  School of Mathematics\\
  University of Bristol, UK\\
\texttt{patrick.rubin-delanchy@bristol.ac.uk}\\
  \And
  Nick Whiteley \\
  School of Mathematics\\
  University of Bristol, UK\\
\texttt{nick.whiteley@bristol.ac.uk}\\
  % \And
  % Coauthor \\
  % Affiliation \\
  % Address \\
  % \texttt{email} \\
}

\begin{document}

\maketitle

\begin{abstract}
In this paper we offer a new perspective on the well established agglomerative clustering algorithm, focusing on recovery of hierarchical structure. We recommend a simple variant of the standard algorithm, in which clusters are merged by maximum average dot product and not, for example, by minimum distance or within-cluster variance. We demonstrate that the tree output by this algorithm provides a bona fide estimate of generative hierarchical structure in data, under a generic probabilistic graphical model. The key technical innovations are to understand how hierarchical information in this model translates into tree geometry which can be recovered from data, and to characterise the benefits of simultaneously growing sample size and data dimension. We demonstrate superior tree recovery performance with real data over existing approaches such as UPGMA, Ward's method, and HDBSCAN.

% proving uniform consistency in a regime where the sample size and dimension grow simultaneously. 

% This work establishes a new perspective on recovery of tree structure from data by hierarchical clustering. We suggest a simple agglomerative clustering algorithm in which affinities between data vectors are defined to be dot products. Under a general form of generative statistical model, we prove this algorithm recovers a tree  which specifies conditional independence and additive hierarchical decomposition of the data. Amongst our key innovations are augmentation of the traditional graphical model representation of conditional independence with geometric structure conveying population statistics of the data, and theoretical analysis of an asymptotic regime in which data dimension and sample size may grow simultaneously. In numerical experiments we find our dot product method outperforms standard techniques such as UPGMA, Ward's method and HDBSCAN.
\end{abstract}

\section{Introduction}
Hierarchical structure is known to occur in many natural and man-made systems \citep{mengistu2016evolutionary}, and the problem considered in this paper is how to recover this structure from data. Hierarchical clustering algorithms, \citep{johnson1967hierarchical, murtagh1983survey, murtagh2012algorithms, reddy2018survey} are very popular techniques which organise data into nested clusters, used routinely by data scientists and machine learning researchers, and are easily accessible through open source software packages such as scikit-learn \citep{pedregosa2011scikit}. We focus on perhaps the most popular family of such techniques: agglomerative clustering \citep{day1984efficient}, \citep[Ch.3]{jain1988algorithms}, in which clusters of data points are merged recursively. 

Agglomerative clustering methods are not model-based procedures, but rather simple algorithms. Nevertheless, in this work we uncover a new perspective on agglomerative clustering by introducing a general form of generative statistical model for the data, but without assuming specific parametric families of distributions (e.g., Gaussian).  In our model (section~\ref{sec:model}), hierarchy takes the form of a tree defining the conditional independence structure of latent variables, using elementary concepts from probabilistic graphical modelling \citep{lauritzen1996graphical, koller2009probabilistic}. In a key innovation, we then augment this conditional independence tree to form what we call a \emph{dendrogram}, whose geometry is related to population statistics of the data. The new insight which enables tree recovery in our setting  (made precise and explained in section~\ref{sec:theory}) is that \emph{dot products between data vectors reveal heights of most recent common ancestors in the dendrogram}.

We suggest an agglomerative algorithm which merges clusters according to highest sample average dot product (section~\ref{sec:algorithm}). This is in contrast to many existing approaches which quantify dissimilarity between data vectors using Euclidean distance. We also consider the case where data are preprocessed by reducing dimension using PCA. 
 We mathematically analyse the performance of our dot product clustering algorithm  and establish that under our model, with sample size $n$ and data dimension $p$ growing simultaneously at appropriate rates, the merge distortion \citep{eldridge2015beyond,kim2016statistical} between the algorithm output and underlying tree vanishes. In numerical examples with real data (section~\ref{sec:numerical}), where we have access to labels providing a notion of true hierarchy, we compare performance of our algorithm against existing methods in terms of a Kendall $\tau_b$ correlation performance measure, which quantifies association between ground-truth and estimated tree structure. We examine statistical performance with and without dimension reduction by PCA, and illustrate how dot products versus Euclidean distances relate to semantic structure in ground-truth hierarchy.

\paragraph{Related work} 
Agglomerative clustering methods combine some dissimilarity measure with a `linkage' function, determining which clusters are combined. Popular special cases include UPGMA \citep{sokal58} and Ward's method \citep{ward1963hierarchical}, against which we make numerical comparisons (section \ref{sec:numerical}). Owing to the simple observation that, in general, finding the largest dot product between data vectors is not equivalent to finding the smallest Euclidean distance, we can explain why these existing methods may not correctly recover tree structure under our model (appendix~\ref{app:comparing_methods_theory}). Popular density-based clustering methods  methods include CURE \citep{guha1998cure}, OPTICS \citep{ankerst1999optics}, BIRCH \citep{zhang1996birch} and HDBSCAN \citep{campello2013density}. In section \ref{sec:numerical} we  discuss the extent to which these methods can and cannot be compared against ours in terms of tree recovery performance.

Existing theoretical treatments of hierarchical clustering  involve different mathematical problem formulations and assumptions to ours.  One common setup is to assume an underlying ultrametric space whose geometry specifies the unknown tree, and/or to study tree recovery as $n\to\infty$ with respect to a cost function, e.g,  \citep{carlsson2010characterization,dasgupta2016cost, roy2016hierarchical,charikar2017approximate,cohen2017hierarchical,cohen2019hierarchical, manghiuc2021hierarchical, chatziafratis2021maximizing}. An alternative problem formulation addresses recovery of the cluster tree of the probability density from which it is assumed data are sampled \citep{rinaldo2012stability,eldridge2015beyond, kim2016statistical}. The unknown tree in our problem formulation specifies conditional independence structure, and so it has a very different interpretation to the trees in all these cited works. Moreover, our data are vectors in $\R^p$, and $p\to\infty$ is a crucial aspect of our convergence arguments, but in the above cited works $p$  plays no role or is fixed. Our definition of dendrogram is different to that in e.g. \citep{carlsson2010characterization}: we do not require all leaf vertices to be equidistant from the root -- a condition which  arises as the ``fixed molecular clock'' hypothesis \citep{debry1992consistency,moorjani2016variation} in phylogenetics. We also allow data to be sampled from non-leaf vertices. There is an enormous body of work on tree reconstruction methods in phylogenetics, e.g. listed at \citep{wikipedia_phylo-software}, but these are mostly not general-purpose solutions to the problem of inferring hierarchy. Associated theoretical convergence results are  limited in  scope, e.g, the famous work \citep{debry1992consistency} is limited to a fixed molecular clock, five taxa and does not allow observation error.

\section{Model and algorithm}\label{sec:model_and_alg}
\subsection{Statistical model,  tree and dendrogram} \label{sec:model}
Where possible, we use conventional terminology from the field of probabilistic graphical models, e.g., \citep{koller2009probabilistic} to define our objects and concepts.  Our model is built around an unobserved tree $\cT = (\cV,\cE)$, that is a directed acyclic graph with vertex and edge sets $\cV$ and $\cE$, with two properties: $\cT$ is connected (ignoring directions of edges),  and each vertex has at most one parent, where we say $u$ is a parent of $v$ if there is an edge from  $u$ to $v$. We observe data vectors $\bY_i\in \R^p$, $i=1,\ldots,n$, which we model as:
\begin{align}
    \bY_{i} = \bX (Z_i) + 
 \bS(Z_i)\bE_{i},\label{eq:LMM}
\end{align}
comprising three independent sources of randomness:
\begin{itemize}[leftmargin=.25in]
\item $Z_1,\ldots,Z_n$ are i.i.d., discrete random variables, with distribution supported on a subset of vertices $\cZ\subseteq\cV$, $|\cZ|<\infty$;
\item $\bX(v)\coloneqq [X_1(v)\,\cdots\,X_p(v)]^\top$ is an $\R^p$-valued random vector for each vertex $v\in\cV$;
%\item $\bX(\cdot)\coloneqq [X_1(\cdot)\,\cdots\,X_p(\cdot)]^\top$ where for each $v\in\cV$, $X_j(v)$ is an $\R$-valued random variable;
\item $\bE_1,\ldots,\bE_n$ are i.i.d, $\R^p$-valued random vectors. The elements of $\bE_i$ are i.i.d., zero mean and unit variance. For each $z\in\cZ$, $\bS(z)\in\R^{p\times p}$, is a deterministic matrix.
\end{itemize}
For each $v\in\mathcal{Z}$, one can think of the vector $\bX(v)$ as the random centre of a "cluster", with correlation structure of the cluster determined by the matrix $\bS(z)$. The latent variable $Z_i$ indicates which cluster the $i$th data vector $\bY_i$ is associated with. The vectors $\bX(v)$, $v\in\mathcal{V}\setminus\mathcal{Z}$, correspond to unobserved vertices in the underlying tree. As an example, $\cZ$ could be the set of leaf vertices of $\cT$, but neither our methods nor theory require this to be the case. 

Throughout this paper, we assume that the tree $\cT$ determines two distributional properties of $\bX$. Firstly, we assume $\cT$ is the conditional independence graph of the collection of random variables $X_j\coloneqq \{X_j(v);v\in\cV\}$ for each $j$, that is, 

\begin{assumption}\label{ass:conditional_indep}for all $j=1,\ldots,p$, the  marginal probability density or mass function of $X_j$ factorises as:
$$p(x_j) = \prod_{v\in\cV} p\left(x_j(v) | x_j(\mathrm{Pa}_{v})\right),$$
where $\mathrm{Pa}_{v}$ denotes the parent of vertex $v$.
\end{assumption}
% \begin{align}
%     p(x_j) = \prod_{v\in\cV} p\left(x_j(v) | x_j(\text{Pa}_{v})\right), \label{eq:conditional_indep}
%         % p\{X_j(v): v \in \cV\} = \prod_{v} p\{X_j(v) \mid X_j(\text{Pa}_{v})\}, \label{eq:conditional_indep}
% \end{align} 
However we do not necessarily require that $X_1,\ldots,X_p$ are independent or identically distributed. Secondly, we assume
\begin{assumption}\label{ass:martingale}for each $j=1\,\ldots,p$, the following martingale-like property holds:
$$
\E\left[X_j(v) | X_j(\mathrm{Pa}_v)\right] = X_j(\mathrm{Pa}_v), 
$$
%\label{eq:martingale}
%\end{align}
for all vertices $v\in\cV$ except the root.
\end{assumption}
  The conditions \textbf{A\ref{ass:conditional_indep}} and \textbf{A\ref{ass:martingale}}  induce an additive hierarchical structure in $\bX$. For any distinct vertices $u,v\in\cV$ and $w$ an ancestor of both, \textbf{A\ref{ass:conditional_indep}} and \textbf{A\ref{ass:martingale}}  imply that given $X_j(w)$, the increments $X_j(u) - X_j(w)$ and  $X_j(v) - X_j(w)$ are conditionally independent and both conditionally mean zero.

To explain our algorithm we need to introduce the definition of a \emph{dendrogram}, $\cD=(\cT,h)$, where $h:\cV\to \R_+$ is a function which assigns a height to each vertex of $\cT$, such that $h(v)\geq h(\text{Pa}_v)$ for any vertex $v\in\cV$ other than the root.  The term ``dendrogram'' is derived from the ancient Greek for ``tree'' and ``drawing'', and indeed the numerical values  $h(v)$, $v\in\cV$, can be used to construct a drawing of $\cT$ where height is measured with respect to some arbitrary baseline on the page, an example is shown in figure~\ref{fig:dendrogram_illustrations}(a). With $\langle\cdot,\cdot\rangle$ denoting the usual dot product between vectors, the function
\begin{align}
    \alpha(u,v) & \coloneqq \frac{1}{p} \E \left[\langle\bX(u),\bX(v)\rangle\right], \quad u,v \in \cV,\label{eq:alpha_defn}
\end{align}
 will act as a measure of affinity underlying our algorithm. The specific height function we consider is $h(v)\coloneqq \alpha(v,v)$. The martingale property \textbf{A\ref{ass:martingale}} ensures that this height function satisfies $h(v)\geq h(\text{Pa}_v)$ as required, see lemma~\ref{lem:height_well_defined} in appendix~\ref{sec:appendix_proofs}. In appendix~\ref{app:comparing_methods_theory} we also consider cosine similarity as an affinity measure, and analyse its performance under a multiplicative noise model, cf. the additive model \eqref{eq:LMM}.

\begin{figure}[t]
    \centering
    \includegraphics[width=1\textwidth]{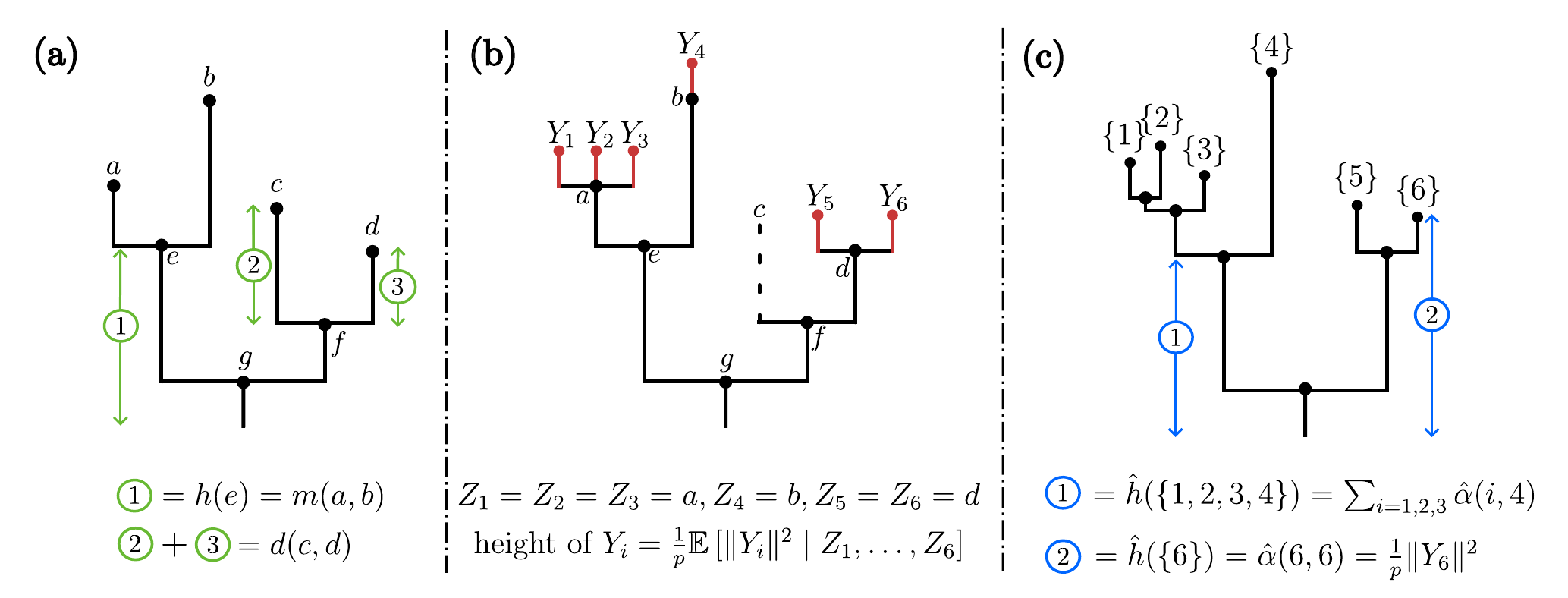}
    \caption{(a) An example of the dendrogram $\cD$ with $\cV=\{a,b,c,d,e,f,g\}$, see lemma~\ref{lem:merge_height} for interpretation of the merge height $m(\cdot,\cdot)$ and distance $d(\cdot,\cdot)$. Horizontal distances in this diagram are chosen arbitrarily. (b) $\cD$ augmented according to $\cZ=\{a,b,c,d\}$ and the realization: $Z_1,Z_2,Z_3=a$, $Z_4=b$, $Z_5=Z_6=d$,  see section~\ref{sec:putting_together} for discussion.  (c) The dendrogram $\hat{\cD}$ output from algorithm~\ref{alg:ip_hc} in the case $\hat{\alpha}=\hat{\alpha}_{\mathrm{data}}$. $\hat{\cD}$ can be seen to approximate the dendrogram in (b) and hence $\cD$. \label{fig:dendrogram_illustrations} }
\end{figure}

\subsection{Algorithm}\label{sec:algorithm}
Combining the model \eqref{eq:LMM} with \eqref{eq:alpha_defn}  we have $\alpha(Z_i,Z_j)=p^{-1}\mathbb{E}[\langle\bY_i,\bY_j\rangle|Z_i,Z_j]$ for $i\neq j \in [n]$, $[n]\coloneqq \{1,\ldots,n\}$.   The input to algorithm~\ref{alg:ip_hc}  is an estimate $\hat{\alpha}(\cdot,\cdot)$ of all the pairwise affinities $\alpha(Z_i, Z_j)$, $i\neq j \in [n]$. We consider two approaches to estimating $\alpha(Z_i, Z_j)$, with and without dimension reduction by uncentered PCA. For some $r\leq \min\{p,n\}$, let $\bV\in\R^{p\times r}$ denote the matrix whose columns are orthonormal eigenvectors of $\sum_{i=1}^n \bY_i \bY_i^\top $ associated with its $r$ largest eigenvalues. Then $\zeta_i \coloneqq \bV^\top \bY_i$ is $r$-dimensional vector of principal component scores for the $i$th data vector. The two possibilities for $\hat{\alpha}(\cdot,\cdot)$ we consider are:
\begin{align}
\label{eq:alpha_hat_defn}
\hat \alpha_{\text{data}}(i,j)\coloneqq \frac{1}{p}\langle \bY_i,\bY_j \rangle,\qquad \hat \alpha_{\text{pca}} (i,j)\coloneqq \frac{1}{p}\langle \zeta_i,\zeta_j \rangle.
\end{align}
In the case of $\hat{\alpha}_{\text{pca}}$ the dimension $r$ must be chosen. Our theory in section \ref{sec:consistency} assumes that $r$ is chosen as the rank of the matrix with entries $\alpha(u, v), u,v \in \cZ$, which is at most $|\cZ|$. In practice $r$ usually must be chosen based on the data, we discuss this in appendix \ref{sec:wasserstein}.

Algorithm~\ref{alg:ip_hc} returns a dendrogram $\hat \cD=(\hat \cT,\hat h)$, comprising a tree $\hat \cT = (\hat\cV,\hat\cE)$ and height function $\hat h$. Each vertex in $\hat\cV$ is a subset of $[n]$, thus indexing a subset of the data vectors $\bY_1,\ldots,\bY_n$. The leaf vertices are the singleton sets $\{i\}$, $i\in[n]$, corresponding to the data vectors themselves.  As algorithm~\ref{alg:ip_hc}  proceeds, vertices are appended to $\hat \cV$, edges are appended to $\hat \cE$, and the domain  of the function $\hat{\alpha}(\cdot,\cdot)$ is extended as affinities between elements of $\hat \cV$ are computed. Throughout the paper we simplify notation by writing $\hat{\alpha}(i,j)$ as shorthand for $\hat{\alpha}(\{i\},\{j\})$ for $i,j\in[n]$, noting that each argument of $\hat{\alpha}(\cdot,\cdot)$ is in fact a subset of $[n]$. 

\begin{algorithm}[h!]
 \caption{Dot product hierarchical clustering} \label{alg:ip_hc}
 \begin{flushleft}
  \textbf{Input:} pairwise affinities $\hat{\alpha}(\cdot,\cdot)$ between $n$ data points
 \end{flushleft}
\begin{algorithmic}[1]
  \STATE Initialise partition $P_0 \coloneqq \{\{1\}, \dots, \{n\}\}$, vertex set $\hat{\cV}\coloneqq P_0$ and edge set $\hat{\cE}$ to the empty set.
  \FOR{$m \in \{1, \dots, n-1\}$}
  \STATE Find  distinct pair $u,v \in P_{m-1}$ with largest affinity  
  %Search the partition $P_{h-1}$ for the two distinct cells $C_i, C_j \in P_{h-1}$ with largest affinity.
  \STATE Update $P_{m-1}$ to $P_m$ by merging $u,v$ to form $w \coloneqq u \cup v$
  % = (P_{h-1} \cup \{C_i \cup C_j\})\setminus \{C_i,C_j\}$  
  % by merging $C_i,C_j$, creating a parent $C_h = C_i \cup C_j$. 
   \STATE Append vertex $w$ to $\hat\cV$ and directed edges  $w\shortrightarrow u$ and $w\shortrightarrow v$ to $\hat\cE$
   
  \STATE Define affinity between $w$ and other members of $P_m$ as
  \begin{align*}
      \hat{\alpha}(w, \cdot) &\coloneqq \frac{|u|}{|w|} \hat{\alpha}(u,\cdot) + \frac{|v|}{|w|} \hat{\alpha}(v,\cdot), %\\% &= \frac{{p|C_h||C_k|} \sum_{u \in C_h, v \in C_k} \hat{\alpha}(u,v) = 
      %       \hat{\alpha}(C_h, C_k) &= \frac{1}{|C_h|} \big (|C_i| \hat{\alpha}(C_i,C_k) + |C_j| \hat{\alpha}(C_j,C_k) \big ) \\% &= \frac{{p|C_h||C_k|} \sum_{u \in C_h, v \in C_k} \hat{\alpha}(u,v) = 
      % \hat{\alpha}(C_h, C_h) &=\hat{\alpha}(C_i,C_j)
  \end{align*}
and  $\hat{\alpha}(w, w) \coloneqq\hat{\alpha}(u,v)$.
  \ENDFOR
\end{algorithmic}
%\vspace{-1em}
 \begin{flushleft}
  \textbf{Output:} Dendrogram $\hat{\cD}\coloneqq(\hat{\cT},\hat{h})$, comprising tree $\hat{\cT}=(\hat\cV ,\hat{\cE})$ and heights   $\hat h(v) \coloneqq \hat{\alpha}(v,v)$ for $v\in\hat{\cV}\setminus P_0$, and $\hat h(v) \coloneqq \max\{\hat h(\text{Pa}_v), \hat{\alpha}(v,v)\}$ for $v\in P_0$. 
  % $\bigcup_{h=0}^{n-1}P_h$, defining the height of cluster $C_h$ as $\hat{\alpha}(C_h,C_h)$
 \end{flushleft}
 %\vspace{-1em}
 \end{algorithm}

\paragraph{Implementation using scikit-learn}
The presentation of algorithm~\ref{alg:ip_hc} has been chosen to simplify its theoretical analysis, but alternative formulations of the same method may be much more computationally efficient in practice. In appendix \ref{sec:append_implent} we outline how algorithm~\ref{alg:ip_hc} can easily be implemented using the \verb|AgglomerativeClustering| class in scikit-learn \citep{pedregosa2011scikit}.

\section{Performance Analysis}\label{sec:theory}

\subsection{Merge distortion is upper bounded by affinity estimation error}\label{sec:merge_dist_bounded_by_affinity_error}

In order to explain the performance of algorithm~\ref{alg:ip_hc} we introduce the \emph{merge height} functions:
\begin{align*}
m(u,v) &\coloneqq h(\text{most recent common ancestor of $u$ and $v$}), & u,v \in \cV,\\
\hat{m}(u,v) &\coloneqq \hat h(\text{most recent common ancestor of $u$ and $v$}), & u,v \in \hat{\cV}.
\end{align*}
To simplify notation we write $\hat{m}(i,j)$ as shorthand for $\hat{m}(\{i\},\{j\})$, for $i,j\in[n]$. The discrepancy between any two dendrograms whose vertices are in correspondence can be quantified by \emph{merge distortion} \citep{eldridge2015beyond} -- the maximum absolute difference in merge height across all corresponding pairs of vertices. \citep{eldridge2015beyond} advocated merge distortion as a performance measure for cluster-tree recovery,  which is different to our model-based formulation, but merge distortion turns out to be a useful and tractable performance measure in our setting too. As a preface to our main theoretical results, lemma \ref{lem:merge_height} explains how the geometry of the dendrogram $\cD$, in terms of merge heights, is related to population statistics of our model. Defining  $d(u,v) \coloneqq h(u)-h(w) + h(v)-h(w)$ for $u\neq v \in \cV$, where $w$ is the most recent common ancestor of $u$ and $v$, we see $d(u,v)$ is the vertical distance on the dendrogram from $u$ down to $w$ then back up to $v$, as illustrated in figure~\ref{fig:dendrogram_illustrations}(a).

% , the difference in height between $u$ and $w$ plus the difference in height between $v$ and $w$, 

\begin{lem} \label{lem:merge_height}
For any two vertices $u,v \in \cV$,
\begin{equation}
m(u,v) =\frac{1}{p}\E\left[\langle \bX(u),\bX(v)\rangle\right] = \alpha(u,v),\quad
d(u,v)  = \frac{1}{p}\E\left[\left\|\bX(u)-\bX(v)\right\|^2\right].\label{eq:m_and_d}
\end{equation}
\end{lem}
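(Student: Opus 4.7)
Proof plan for Lemma~\ref{lem:merge_height}.

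Let $w$ denote the most recent common ancestor of $u$ and $v$ in $\cT$. The first equality, $m(u,v) = \alpha(u,v)$, is the substantive claim; the identity $\alpha(u,v) = \frac{1}{p}\E[\langle \bX(u),\bX(v)\rangle]$ is just the definition \eqref{eq:alpha_defn}. Working coordinatewise, it suffices to show that for each $j \in \{1,\ldots,p\}$,
\begin{equation*}
\E[X_j(u) X_j(v)] = \E[X_j(w)^2],
\end{equation*}
since summing over $j$ and dividing by $p$ yields $\alpha(u,v) = \alpha(w,w) = h(w) = m(u,v)$.

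The plan is to condition on $X_j(w)$ and factor the product using tower and conditional independence. First I would extend Assumption~\textbf{A\ref{ass:martingale}} from parent to general ancestor: if $a$ is any ancestor of $b$, an induction on the length of the directed path $a = v_0 \to v_1 \to \cdots \to v_k = b$ combined with the tower property and the fact that, under \textbf{A\ref{ass:conditional_indep}}, conditioning on the full trajectory collapses to conditioning on the immediate parent, gives $\E[X_j(b) \mid X_j(a)] = X_j(a)$. Applying this with $a = w$, $b = u$ and $b = v$ yields $\E[X_j(u) \mid X_j(w)] = \E[X_j(v) \mid X_j(w)] = X_j(w)$.

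The main obstacle is establishing conditional independence of $X_j(u)$ and $X_j(v)$ given $X_j(w)$. This follows from Assumption~\textbf{A\ref{ass:conditional_indep}}, but needs a careful graph argument using that $w$ is the MRCA: the vertices of $\cT$ partition (after removing $w$) into the subtree descending through the child of $w$ lying on the path toward $u$, the analogous subtree toward $v$, and the remainder; the factorisation in \textbf{A\ref{ass:conditional_indep}} then decomposes over these pieces so that marginalising out all variables except $X_j(u), X_j(v), X_j(w)$ preserves the product structure, giving $X_j(u) \CI X_j(v) \mid X_j(w)$. Combining this with the extended martingale identity,
\begin{equation*}
\E[X_j(u) X_j(v) \mid X_j(w)] = \E[X_j(u) \mid X_j(w)]\, \E[X_j(v) \mid X_j(w)] = X_j(w)^2,
\end{equation*}
and taking an outer expectation gives the desired equality. (Edge cases are straightforward: if $u = v$ or one is an ancestor of the other, then $w$ equals one of them and the same calculation collapses correctly.)

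Finally, for the distance identity, expand
\begin{equation*}
\tfrac{1}{p}\E\left[\|\bX(u) - \bX(v)\|^2\right] = \alpha(u,u) - 2\alpha(u,v) + \alpha(v,v) = h(u) - 2h(w) + h(v),
\end{equation*}
using the first part, which is exactly $d(u,v)$ by definition. The whole argument is short once the ancestor-martingale extension and the MRCA conditional independence are in hand; these two graphical facts are where the real work lies.
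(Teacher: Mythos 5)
Your proposal is correct and follows essentially the same route as the paper's proof: condition on $X_j(w)$ for the MRCA $w$, combine the tower property with conditional independence and the martingale property to get $\E[X_j(u)X_j(v)] = \E[X_j(w)^2]$, sum over $j$, and obtain the distance identity by expanding the squared norm. The only difference is that you explicitly work out the two graphical facts (the ancestor-level martingale identity via induction along the path, and the MRCA conditional independence via the factorisation) that the paper dispatches with the phrase ``standard conditional independence arguments.''
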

The proof is in appendix~\ref{sec:appendix_proofs}. Considering the first two equalities in \eqref{eq:m_and_d}, it is natural to ask if the estimated affinities $\hat{\alpha}(\cdot,\cdot)$ being close to the true affinities $\alpha(\cdot,\cdot)$ implies a small merge distortion between $\hat{m}(\cdot,\cdot)$ and $m(\cdot,\cdot)$. This is the subject of our first main result, theorem~\ref{thm:stability}  below. In appendix~\ref{app:comparing_methods_theory} we use the third equality in \eqref{eq:m_and_d} to explain why popular agglomerative techniques such as UPGMA \citep{sokal58} and Ward's method \citep{ward1963hierarchical} which merge clusters based on proximity in Euclidean distance may enjoy limited success under our model, but in general do not correctly recover tree structure. 

Let $b$ denote the minimum branch length of $\cD$, that is, $b = \min \{h(v)-h(\text{Pa}_v)\}$, where the minimum is taken over all vertices in $\cV$ except the root.

\begin{thm}\label{thm:stability}
Let the function $\hat{\alpha}(\cdot,\cdot)$ given as input to algorithm~\ref{alg:ip_hc} be real-valued and symmetric but otherwise arbitrary. For any $z_1, \ldots, z_n \in \cZ$,  if 
 \[\max_{i,j \in [n], i \neq j} | \alpha(z_i, z_j)- \hat{\alpha}(i,j)| < b/2,\]
 then the dendrogram returned by algorithm~\ref{alg:ip_hc} satisfies
% \[\max_{i,j \in [n], i \neq j} |m_{\hat{D}}(i,j) - m_D(z_i, z_j)|  \leq \max_{i,j \in [n], i \neq j} | \hat{\alpha}(i,j) - \alpha(z_i, z_j)|.\]
% Additionally, if $\max_{i,j \in [n], i \neq j} | \hat{\alpha}(i,j) - \alpha(z_i, z_j)| < b/2$,
\begin{equation}\max_{i,j \in [n], i \neq j} |m(z_i, z_j) - \hat{m}(i,j)| \leq \max_{i,j \in [n], i \neq j} | \alpha(z_i, z_j) - \hat{\alpha}(i,j)|.\label{eq:thm1_bound}
\end{equation}
\end{thm}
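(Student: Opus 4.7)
My plan is to reduce the merge distortion to the affinity estimation error in three stages.

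The first stage is an averaging identity: throughout the algorithm's execution, for any two distinct clusters $u,v$ in the current partition,
\[
\hat{\alpha}(u,v) \;=\; \frac{1}{|u|\cdot|v|}\sum_{i \in u,\, j \in v} \hat{\alpha}(i,j).
\]
This follows by a direct induction on the number of merges from the update rule $\hat{\alpha}(w,\cdot) = (|u|/|w|)\hat{\alpha}(u,\cdot) + (|v|/|w|)\hat{\alpha}(v,\cdot)$. Setting $\epsilon \coloneqq \max_{i \neq j}|\alpha(z_i,z_j) - \hat{\alpha}(i,j)|$ and defining $\alpha^\star(u,v)$ as the analogous average of the true affinities $\alpha(z_i,z_j)$, the triangle inequality gives $|\hat{\alpha}(u,v) - \alpha^\star(u,v)| \leq \epsilon$ for every cluster pair the algorithm ever encounters.

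The second, and main, stage is a tree-consistency invariant. Using lemma~\ref{lem:merge_height} to identify $\alpha(z_i,z_j) = m(z_i,z_j)$, I would introduce the canonical clusters $C(v) \coloneqq \{i \in [n] : z_i \text{ lies in the subtree of } v\}$, $v \in \cV$, which form a laminar family, and note that $m(z_i,z_j) = h(v^\star)$ where $v^\star$ is the smallest vertex with $i,j \in C(v^\star)$. By strong induction on the step count $m$ I would prove that $P_m$ is a laminar refinement of this canonical hierarchy, and moreover that for any two distinct clusters $u,w \in P_m$ whose smallest containing canonical clusters differ, $m(z_i, z_j)$ is constant over $(i,j) \in u \times w$ and equal to $\alpha^\star(u,w)$. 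The inductive step invokes the gap condition: among the available merges, the value $\alpha^\star(\cdot,\cdot)$ at a merge that would cross a canonical boundary is smaller by at least $b$ than that at any within-canonical merge whose underlying subtree has not yet been fully assembled. Since $\hat{\alpha}$ deviates from $\alpha^\star$ by less than $b/2$, the greedy rule cannot select a cross-canonical merge while a within-canonical merge remains available, so the invariant propagates.

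In the third stage the bound falls out immediately: for any $i \neq j$, $\hat{m}(i,j) = \hat{\alpha}(u,w)$ at the step when the clusters containing $i$ and $j$ are joined; by the invariant $\alpha^\star(u,w) = m(z_i,z_j)$, and by the first stage $|\hat{\alpha}(u,w) - \alpha^\star(u,w)| \leq \epsilon$, which is exactly \eqref{eq:thm1_bound}. The principal obstacle is the precise formulation and case analysis for the inductive step of the second stage -- particularly, tracking partially assembled canonical clusters and handling configurations where sibling canonical blocks at different depths simultaneously compete with partially assembled blocks. This is precisely where the hypothesis $\epsilon < b/2$ is needed.
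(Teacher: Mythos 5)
Your strategy is sound and reaches the correct conclusion, but it is a genuinely different route from the paper's. The paper never establishes your second-stage invariant that the partitions $P_m$ respect the canonical clusters $C(v)$. Instead, after proving your first-stage averaging identity (its lemma~\ref{lem:alpha_hat_identity}) and the monotonicity of the output heights (proposition~\ref{prop:validity}), it bounds $m(z_i,z_j)-\hat m(i,j)$ from below and above by two separate arguments: the lower bound comes from a connectivity argument (the graph on the estimated most recent common ancestor cluster with edges $\hat{\alpha}(k,l)\geq\hat H$ is connected, hence the thresholded graph on true affinities is connected, hence complete by the three-point property of merge heights) and, notably, does not use the hypothesis $M<b/2$ at all; the upper bound uses $M<b/2$ via a ``last step at which a vertex of height at least $H-M$ is formed'' argument. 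Your route proves something strictly stronger, namely that under the $b/2$ condition the algorithm never performs a merge that properly crosses a canonical cluster, i.e.\ exact recovery of the laminar family $\{C(v)\}$, and both bounds then fall out together; this connects nicely to the exact-recovery discussion in appendix~\ref{sec:app_exact_recovery}. The price is the heavier case analysis you anticipate.

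Two points in your sketch need repair, though both are fixable. First, your gap claim is false as literally stated: a boundary-crossing merge need not sit $b$ below \emph{every} unfinished within-canonical merge, since a crossing deep inside a tall subtree can have $\alpha^\star$ far above a legitimate merge in a shallow sibling subtree. What the induction actually needs is local: if $(u,w)$ crosses $C(v)$, i.e.\ $u\subsetneq C(v)$ and $w\cap C(v)=\emptyset$, then $\alpha^\star(u,w)\leq h(\mathrm{Pa}_v)\leq h(v)-b$, while there always exists another block $u'\subseteq C(v)$ in the current partition with $\alpha^\star(u,u')\geq h(v)$, so under $\epsilon<b/2$ the pair $(u,u')$ strictly beats $(u,w)$ and the argmax is never a crossing pair. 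Second, your constancy claim covers only pairs whose smallest containing canonical clusters differ, but the merge that determines $\hat m(i,j)$ is typically one where they coincide (two partial assemblies of the same $C(v)$ being joined); constancy does still hold there, because laminarity forces every block whose smallest container is $C(v)$ to be a union of complete child clusters $C(v')$ together with points having $z_k=v$, so two disjoint such blocks meet only at merge height $h(v)$ --- but this case must be stated and proved, since it is exactly where the conclusion is read off.
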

The proof is in appendix~\ref{sec:appendix_proofs}.

\subsection{Affinity estimation error vanishes with increasing dimension and sample size}\label{sec:consistency} 
Our second main result, theorem~\ref{thm:consistency} below, concerns the accuracy of estimating the affinities $\alpha(\cdot,\cdot)$ using $\hat\alpha_{\text{data}}$ or $\hat\alpha_{\text{pca}}$ as defined in \eqref{eq:alpha_hat_defn}. We shall consider the following technical assumptions.
\begin{assumption}[Mixing across dimensions]\label{ass:mixing}
For mixing coefficients $\varphi$ satisfying $\sum_{k\geq 1}\varphi^{1/2}(k)<\infty$ and all $u,v\in\cZ$, the sequence $\{(X_j(u),X_j(v));j\geq 1\}$ is $\varphi$-mixing.
\end{assumption}
\begin{assumption}[Bounded moments]\label{ass:moments}
For some $q\geq2$, $\sup_{j\geq1}\max_{v\in\cZ}\mathbb{E}[|X_j(v)|^{2q}]<\infty$ and $\mathbb{E}[|\bE_{11}|^{2q}]<\infty$, where $\bE_{11}$ is the first element of the vector $\bE_{1}$.
\end{assumption}
\begin{assumption}[Disturbance control]\label{ass:disturbance}
$\max_{v\in\cZ} \|\bS(v)\|_{\mathrm{op}}\in O(1)$ as $p\to\infty$, where $\|\cdot\|_{\mathrm{op}}$ is the spectral norm.
\end{assumption}
\begin{assumption}[PCA rank]\label{ass:r} The dimension $r$ chosen in definition of $\hat{\alpha}_{\mathrm{pca}}$, see \eqref{eq:alpha_hat_defn}, is equal to the rank of the matrix with entries $\alpha(u,v)$, $u,v\in\cZ$. 
\end{assumption}
The concept of $\varphi$-mixing is a classical weak-dependence condition, e.g. \citep{dobrushin1956central,peligrad1985invariance}. \textbf{A\ref{ass:mixing}} implies that for each $j\geq1$, $(X_j(u),X_j(v))$ and $(X_{j+\delta}(u),X_{j+\delta}(v))$ are asymptotically independent as $\delta\to\infty$. However, it is important to note that $\hat{\alpha}_{\text{data}}$ and $\hat{\alpha}_{\text{pca}}$ in \eqref{eq:alpha_hat_defn}, and hence the operation of algorithm~\ref{alg:ip_hc} with these inputs, are invariant to permutation of the data dimensions $j=1,\ldots,p$. Thus our analysis under \textbf{A\ref{ass:mixing}} only requires there is \emph{some} permutation of dimensions under which $\varphi$-mixing holds. \textbf{A\ref{ass:moments}} is a fairly mild integrability condition. \textbf{A\ref{ass:disturbance}} allows control of magnitudes of the ``disturbance'' vectors $\bY_i-\bX(Z_i)=\bS(Z_i)\bE_i$. Further background and discussion of assumptions is given in appendix \ref{sec:disc_of_assumptions}.
\begin{thm}\label{thm:consistency}
Assume that \textbf{A\ref{ass:conditional_indep}}-\textbf{A\ref{ass:disturbance}} hold and let $q$ be as in \textbf{A\ref{ass:moments}}. Then 
\begin{equation}
\max_{i,j \in [n], i \neq j} \left| \alpha(Z_i,Z_j) - \hat{\alpha}_{\mathrm{data}}(i,j) \right| \in O_\mathbb{P}\left(\frac{n^{2/q}}{\sqrt{p}} \right).\label{eq:consistency_claim_1}
\end{equation}
If additionally \textbf{A\ref{ass:mixing}} is strengthened from $\varphi$-mixing to independence, $\bS(v)=\sigma \mathbf I_p$  for some constant $\sigma\geq 0$ and all $v\in\cZ$ (in which case \textbf{A\ref{ass:disturbance}} holds), and \textbf{A\ref{ass:r}} holds, then 
\begin{equation}
\max_{i,j \in [n], i \neq j} \left| \alpha(Z_i,Z_j) - \hat{\alpha}_{\mathrm{pca}}(i,j) \right| \in   O_\mathbb{P}\left(\sqrt{\frac{nr}{p}} + \sqrt{\frac{r}{n}}\right).\label{eq:consistency_claim_2}
\end{equation}
\end{thm}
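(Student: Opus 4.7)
The foundation of both bounds is the four-term decomposition obtained by substituting $\bY_i = \bX(Z_i) + \bS(Z_i)\bE_i$:
\begin{align*}
\tfrac{1}{p}\langle \bY_i,\bY_j\rangle - \alpha(Z_i,Z_j)
&= \bigl(\tfrac{1}{p}\langle \bX(Z_i),\bX(Z_j)\rangle - \alpha(Z_i,Z_j)\bigr) \\
&\quad + \tfrac{1}{p}\langle \bX(Z_i),\bS(Z_j)\bE_j\rangle + \tfrac{1}{p}\langle \bS(Z_i)\bE_i,\bX(Z_j)\rangle + \tfrac{1}{p}\langle \bS(Z_i)\bE_i,\bS(Z_j)\bE_j\rangle.
\end{align*}
For $i\neq j$ the right-hand side is centred conditional on $(Z_1,\ldots,Z_n)$ and on the signal $\{\bX(v):v\in\cV\}$, by lemma~\ref{lem:merge_height} and independence of the $\bE_i$, so the task reduces to moment bounds for scalar sums of length $p$.

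For the data estimator I would bound the conditional $q$-th moment of each summand and combine via Markov's inequality with a union bound over the $O(n^2)$ pairs. The signal--signal term $\tfrac{1}{p}\sum_k\{X_k(Z_i)X_k(Z_j)-\E[X_k(Z_i)X_k(Z_j)\mid Z]\}$ is, conditional on $Z$, a centred sum whose increments inherit $\varphi$-mixing from \textbf{A\ref{ass:mixing}} (products of $\varphi$-mixing sequences remain $\varphi$-mixing with the same rate) and have bounded $q$-th moment by \textbf{A\ref{ass:moments}} and Cauchy--Schwarz, uniformly over the finite set $\cZ$. A Rosenthal-type inequality for $\varphi$-mixing sequences under $\sum\varphi^{1/2}(k)<\infty$ then yields conditional $q$-th moment of order $p^{-q/2}$. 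The two cross terms and the noise--noise term are handled analogously after additionally conditioning on $\{\bX(v)\}$: independence of the entries of each $\bE_i$ together with \textbf{A\ref{ass:disturbance}} bounds the relevant variances, and a Burkholder/Marcinkiewicz--Zygmund inequality delivers the same $O(p^{-q/2})$ moment rate. Markov plus a union bound then gives $\prob(\max_{i\neq j}|\cdot|>t)\leq Cn^2 p^{-q/2}t^{-q}$, and setting this to a constant produces the claimed rate $n^{2/q}/\sqrt p$.

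For the PCA estimator I would write $\hat\alpha_{\mathrm{pca}}(i,j) = \tfrac{1}{p}\bY_i^\top\bV\bV^\top\bY_j$ and introduce an oracle projection $\bU\bU^\top$ onto the column span of $\mathbf{M}\coloneqq[\bX(Z_1),\ldots,\bX(Z_n)]$. The analysis then splits in two. First, the oracle error $\tfrac{1}{p}\bY_i^\top\bU\bU^\top\bY_j-\alpha(Z_i,Z_j)$: using $\bU\bU^\top\mathbf{M}=\mathbf{M}$ it decomposes into the same three signal and cross terms as above plus $\tfrac{\sigma^2}{p}\bE_i^\top\bU\bU^\top\bE_j$, which has variance $\sigma^4 r/p^2$ under isotropy; under the independence strengthening of \textbf{A\ref{ass:mixing}}, pointwise sub-exponential concentration plus a union bound absorbs the first three terms and the low-rank noise--noise term into the $\sqrt{r/n}$ contribution. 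Second, the subspace perturbation error $\tfrac{1}{p}\bY_i^\top(\bV\bV^\top-\bU\bU^\top)\bY_j$, bounded entrywise by $\tfrac{\|\bY_i\|\|\bY_j\|}{p}\|\bV\bV^\top-\bU\bU^\top\|_{\mathrm{op}}$. Davis--Kahan/Wedin applied to $\bY\bY^\top = \mathbf{M}\mathbf{M}^\top + \sigma(\mathbf{M}\bE^\top+\bE\mathbf{M}^\top)+\sigma^2\bE\bE^\top$ controls the last factor: the signal gap $\lambda_r(\mathbf{M}\mathbf{M}^\top)$ is of order $np$ by \textbf{A\ref{ass:r}} (since $\tfrac{1}{p}\mathbf{M}^\top\mathbf{M}$ concentrates on the rank-$r$ population matrix $[\alpha(Z_i,Z_j)]_{i,j=1}^n$), whereas the operator norm of the perturbation is controlled by standard random-matrix bounds on $\bE$ together with the rank-$r$ factorisation $\mathbf{M}=\bU_M\Sigma_M\bW_M^\top$, giving $\|\mathbf{M}\bE^\top\|_{\mathrm{op}}\leq\|\Sigma_M\|_{\mathrm{op}}\|\bW_M^\top\bE^\top\|_{\mathrm{op}}=O(\sqrt{np}(\sqrt r + \sqrt p))$. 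Since $\|\bY_i\|^2/p = O_\prob(1)$, the resulting contribution is absorbed into the $\sqrt{nr/p}$ term.

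The principal obstacle is the spectral-gap step in Davis--Kahan: to guarantee $\lambda_r(\mathbf{M}\mathbf{M}^\top)=\Theta(np)$ with high probability, one needs the empirical Gram matrix $[\tfrac{1}{p}\langle\bX(Z_i),\bX(Z_j)\rangle]_{i,j=1}^n$ to track the rank-$r$ population matrix uniformly well enough in $(Z_1,\ldots,Z_n)$ to preserve both rank and minimum nonzero eigenvalue. This is where the independence strengthening and $\bS = \sigma \mathbf{I}_p$ become essential, since they allow pointwise concentration of $\tfrac{1}{p}\langle\bX(u),\bX(v)\rangle$ around $\alpha(u,v)$ uniformly over the finite set $u,v\in\cZ$. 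A secondary subtlety is the uniform control of Rosenthal constants over labels in $\cZ$ in the moment bounds of Part~1, which is again handled by $|\cZ|<\infty$.
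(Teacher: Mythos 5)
Your proof of the first claim \eqref{eq:consistency_claim_1} follows the paper's argument essentially verbatim: the paper's proposition~\ref{prop:consistency_data} uses exactly your four-term decomposition $\Delta_1,\ldots,\Delta_4$, bounds the signal--signal term with a Rosenthal-type moment inequality for $\varphi$-mixing sequences (conditioning on $Z_{1:n}$, which is harmless since $\{Z_i\}$ and $\{\bX(v)\}$ are independent), handles the cross and noise--noise terms with Marcinkiewicz--Zygmund after further conditioning, and finishes with Markov plus a union bound over $n(n-1)/2$ pairs to obtain the rate $n^{2/q}/\sqrt{p}$. One small imprecision: the signal--signal term is not ``centred conditional on $\{\bX(v)\}$'' --- it is deterministic given the signal --- but since you treat it separately conditional on $Z_{1:n}$ only, nothing breaks.

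For the second claim \eqref{eq:consistency_claim_2} you take a genuinely different route, and this is where there is a gap. The paper does not run a Davis--Kahan analysis at all: it endows $\cZ$ with the discrete metric, observes that under independence and $\bS(v)=\sigma\mathbf{I}_p$ the model is exactly the Latent Metric Model of \citet{whiteley2022discovering}, extracts a Mercer feature map $\phi$ with $\langle\phi(u),\phi(v)\rangle=\alpha(u,v)$, imports the uniform bound $\max_i\|p^{-1/2}\mathbf{Q}\zeta_i-\phi(Z_i)\|\in O_\prob(\sqrt{nr/p}+\sqrt{r/n})$ wholesale from their Theorem~1, and converts it to a bound on pairwise dot products by Cauchy--Schwarz and \textbf{A\ref{ass:moments}}. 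Your self-contained alternative is the natural spectral-perturbation attack, but as sketched it does not deliver the stated rate. The step where you claim ``pointwise sub-exponential concentration plus a union bound absorbs the first three terms into the $\sqrt{r/n}$ contribution'' is not available under \textbf{A\ref{ass:moments}}, which grants only $2q$ finite moments; a moment-plus-union-bound treatment of the two cross terms $p^{-1}\langle\bX(Z_i),\sigma\bE_j\rangle$ over the $O(n^2)$ genuinely distinct pairs gives $n^{2/q}/\sqrt{p}$, and for $q<4$ this is \emph{not} dominated by $\sqrt{nr/p}+\sqrt{r/n}$, so your oracle-error bound falls short of the $q$-free rate claimed in \eqref{eq:consistency_claim_2}. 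A secondary issue is that your oracle projection onto the full column span of $\mathbf{M}$ generically has rank exceeding $r$ (the empirical Gram matrix need not have the population rank), so $\bU$ must instead be the top-$r$ eigenspace of $\mathbf{M}\mathbf{M}^\top$, at the cost of $\bU\bU^\top\mathbf{M}\neq\mathbf{M}$ and an extra residual term; together with the spectral-gap concentration you flag, these are precisely the technicalities the cited theorem packages up. The paper itself remarks that improving or even reworking this part of the argument ``seems very challenging,'' which is consistent with the difficulties your sketch runs into.
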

The proof of theorem~\ref{thm:consistency} is in appendix~\ref{sec:proof_of_consistency}. We give an original and self-contained proof of \eqref{eq:consistency_claim_1}. To prove \eqref{eq:consistency_claim_2} we use a recent uniform-concentration result for principal component scores from \citep{whiteley2022discovering}.  Overall, theorem~\ref{thm:consistency} says that affinity estimation error vanishes if the dimension $p$ grows faster enough relative to $n$ (and $r$ in the case of $\hat{\alpha}_{\mathrm{pca}}$,  noting that under \textbf{A\ref{ass:r}}, $r\leq|\cZ|$, so  it is sensible to assume $r$ is much smaller than $n$ and $p$). The argument of $O_\mathbb{P}(\cdot)$ is the convergence rate; if $(X_{p,n})$ is some collection of random variables indexed by $p$ and $n$, $X_{p,n}\in O_\mathbb{P}(n^{2/q}/\sqrt{p})$ means that for any $\epsilon>0$, there exists $\delta$ and $M$ such that $n^{2/q}/\sqrt{p}\geq M$ implies $\mathbb{P}(|X_{p,n}|>\delta)<\epsilon$.    We note the convergence rate in \eqref{eq:consistency_claim_1} is decreasing in $q$ where as rate in \eqref{eq:consistency_claim_2} is not. It is an open mathematical question whether \eqref{eq:consistency_claim_2} can be improved in this regard; sharpening the results of \citet{whiteley2022discovering} used in the proof of \eqref{eq:consistency_claim_2} seems very challenging. However, when $q=2$, \eqref{eq:consistency_claim_1} gives $O_\mathbb{P}\left(n/\sqrt{p}\right)$ compared to $O_\mathbb{P}\left(\sqrt{nr/p} + \sqrt{r/n}\right)$   in \eqref{eq:consistency_claim_2}, i.e. an improvement from $n$ to $\sqrt{nr}$ in the first term.  We explore empirical performance of $\hat{\alpha}_{\mathrm{data}}$ versus $\hat{\alpha}_{\mathrm{pca}}$ in section~\ref{sec:PCA_results}.

\subsection{Interpretation}\label{sec:putting_together}
By combining theorems~\ref{thm:stability} and \ref{thm:consistency}, we find  that  when  $b>0$ is constant and  $\hat\alpha$ is  either $\hat{\alpha}_{\text{data}}$ or $\hat{\alpha}_{\text{pca}}$, the  merge distortion
\begin{equation}\max_{i,j \in [n], i \neq j} |m(Z_i, Z_j) - \hat{m}(i,j)|\label{eq:merge_distortion}
\end{equation}
converges to zero at rates given by the r.h.s of \eqref{eq:consistency_claim_1} and \eqref{eq:consistency_claim_2}.   To gain intuition into what \eqref{eq:merge_distortion} tells us about the resemblance between $\hat{\cD}$  and $\cD$, it is useful to consider an intermediate dendrogram illustrated in figure~\ref{fig:dendrogram_illustrations}(b) which conveys the realized values of $Z_1,\ldots,Z_n$. This dendrogram is constructed from $\cD$ by adding a leaf vertex corresponding to each observation $\bY_i$, with parent $Z_i$ and height $p^{-1}\mathbb{E}[\|\bY_i\|^2 |Z_1,\ldots,Z_n]=h(Z_i)+p^{-1}\mathrm{tr}[\bS(Z_i)^\top\bS(Z_i)]$, and deleting any $v\in\cZ$ such that $Z_i\neq v$ for all $i\in[n]$ (e.g., vertex $c$ in figure~\ref{fig:dendrogram_illustrations}(b)). The resulting merge height  between the vertices corresponding to $\bY_i$ and  $\bY_j$ is $m(Z_i,Z_j)$. \eqref{eq:merge_distortion} being small implies this must be close to $\hat{m}(i,j)$ in $\hat{\cD}$ as in figure~\ref{fig:dendrogram_illustrations}(c). Moreover, in the case $\hat{\alpha}=\hat{\alpha}_{\text{data}}$, the height $\hat{h}(\{i\})$ of leaf vertex $\{i\}$ in $\hat{\cD}$ is upper bounded by $p^{-1}\|\bY_i\|^2$, which under our statistical assumptions is concentrated around $p^{-1}\mathbb{E}[\|\bY_i\|^2 |Z_1,\ldots,Z_n]$, i.e., the heights of the leaves in figure~\ref{fig:dendrogram_illustrations}(c) approximate those of the corresponding leaves in figure~\ref{fig:dendrogram_illustrations}(b). 

Overall we see that $\hat{\cD}$ in  figure~\ref{fig:dendrogram_illustrations}(c) approximates the dendrogram in figure~\ref{fig:dendrogram_illustrations}(b), and in turn $\cD$. However even if $m(Z_i,Z_j)=\hat{m}(i,j)$ for all $i,j$, the tree output from algorithm~\ref{alg:ip_hc}, $\hat{\cT}$, may not be isomorphic (i.e., equivalent up to relabelling of vertices) to the tree in figure~\ref{fig:dendrogram_illustrations}(b); $\hat{\cT}$ is always binary and has $2n-1$ vertices, whereas the tree in  figure~\ref{fig:dendrogram_illustrations}(b) may not be binary, depending on the underlying $\cT$ and the realization of $Z_1,\ldots,Z_n$. This reflects the fact that merge distortion, in general, is a \emph{pseudometric} on dendrograms. However, if one restricts attention to specific classes of true dendrograms $\cD$, for instance binary trees with non-zero branch lengths, then asymptotically algorithm~\ref{alg:ip_hc} can recover them exactly. We explain this point further in appendix \ref{sec:app_exact_recovery}.

% This paper considers nonparametric procedures for estimating $\cT$ from $\bY$, specifically, different agglomerative clustering methods. 

\section{Numerical experiments}\label{sec:numerical}
We explore the numerical performance of algorithm \ref{alg:ip_hc} in the setting of five data sets summarised below. The real datasets used are open source,  and full details of data preparation and sources are given in appendix \ref{sec:app_numerical}.

\textbf{Simulated data.} 
A simple tree structure with vertices $\cV = \{1,2,3,4,5,6,7,8\}$, edge set $\cE = \{6{\shortrightarrow}1, 6{\shortrightarrow} 2, 6 {\shortrightarrow} 3, 7 {\shortrightarrow} 4, 7 {\shortrightarrow} 5, 8 {\shortrightarrow} 6, 8 {\shortrightarrow} 7 \}$ and $\cZ = \{1,2,3,4,5\}$ (the leaf vertices). $Z_1,\ldots,Z_n$ are drawn from the uniform distribution on $\cZ$. The $X_j(v)$ are Gaussian random variables, independent across $j$. Full details of how these variables are sampled are in appendix \ref{sec:app_numerical}. The elements of $\bE_i$ are standard Gaussian, and $\bS(v)=\sigma\mathbf{I}_p$ with $\sigma=1$. 

\textbf{20 Newsgroups.}
We used a random subsample of $n = 5000$ documents from the well-known 20 Newsgroups data set \citep{lang1995newsweeder}. Each data vector corresponds to one document, capturing its $p = 12818$ Term Frequency Inverse Document Frequency features. The value of $n$ was chosen to put us in the regime $p\geq n$, to which our theory is relevant -- see section \ref{sec:consistency}.  Some ground-truth labelling of documents is known: each document is associated with $1$ of $20$ newsgroup topics, organized at two hierarchical levels.

\textbf{Zebrafish gene counts.}
These data comprise gene counts in zebrafish embryo cells taken from their first day of development \citep{wagner2018single}. As embryos develop, cells differentiate into various types with specialised, distinct functions, so the data are expected to exhibit tree-like structure mapping these changes. We used a subsample such that $n = 5079$ and $p = 5498$ to put us in the $p\geq n$ regime. Each cell has two labels: the tissue that the cell is from and a subcategory of this.

\textbf{Amazon reviews.}
This dataset contains customer reviews on Amazon products \citep{data:amazon_reviews}. A random sample of $n=5000$ is taken and each data vector corresponds to one review with $p = 5594$ Term Frequency Inverse Document Frequency features. Each product reviewed has labels which make up a three-level hierarchy of product types.

\textbf{S\&P 500 stock returns.} 
The data are $p =1259$ daily returns between for $n=368$ stocks which were constituents of the S\&P 500 market index \citep{data:stock} between $2013$ to $2018$. The two-level hierarchy of stock sectors by industries and sub-industries follows the Global Industry Classification Standard \citep{data:stock_labels}.

\subsection{Comparing algorithm~\ref{alg:ip_hc} to existing methods}
We numerically compare  algorithm~\ref{alg:ip_hc} against three very popular variants of agglomerative clustering: UPGMA with Euclidean distance, Ward's method, and UPGMA with cosine distance. These are natural comparators because they work by iteratively merging clusters in a manner similar to algorithm~\ref{alg:ip_hc}, but using different criteria for choosing which clusters to merge.   In appendix~\ref{app:comparing_methods_theory} we complement our numerical results with  mathematical insights into how these methods perform under our modelling assumptions. Numerical results for other linkage functions and distances are given in appendix ~\ref{sec:app_numerical}. Several popular density-based clustering methods use some hierarchical structure, such as CURE \citep{guha1998cure}, OPTICS \citep{ankerst1999optics} and BIRCH \citep{zhang1996birch} but these have limitations which prevent direct comparisons: they aren't equipped  with a way to simplify the structure into a tree, which it is our aim to recover, and only suggest extracting a flat partition based on a density threshold. HDBSCAN \citep{campello2013density} is a density-based method that doesn't have these limitations, and we report numerical  comparisons against it. 

\paragraph{Kendall $\tau_b$ ranking correlation.} For real data some ground-truth hierarchical labelling may be available but ground-truth merge heights usually are not. We need a performance measure to quantitatively compare methods operating on such data.  Commonly used clustering performance measures such as the Rand index \citep{rand1971objective} and others \citep{hubert1985comparing,fowlkes1983method} allow pairwise comparisons between partitions, but do not capture information about hierarchical structure. The cophenetic correlation coefficient \citep{sokal1962comparison} is commonly used to compare dendrograms, but relies on an assumption that points close in Euclidean distance should be considered similar which is incompatible with our notion of dot product affinity.  To overcome these obstacles we formulate a performance measure as follows. For each of $n$ data points, we rank the other $n-1$ data points according to the order in which they merge with it in the ground-truth hierarchy. We then compare these ground truth rankings to those obtained from a given hierarchical clustering algorithm using the Kendall $\tau_b$ correlation coefficient \citep{kendall1948rank}. This outputs a value in the interval $[-1,1]$, with $-1$, $1$ and $0$ corresponding to  negative, positive and lack of association between the ground-truth and algorithm-derived rankings. We report the mean association value across all $n$ data points as the overall performance measure. Table \ref{tbl:results} shows results with raw data vectors $\bY_{1:n}$ or PC scores  $\zeta_{1:n}$ taken as input to the  various algorithms. For all the data sets except S\&P 500,  algorithm~\ref{alg:ip_hc} is found to recover hierarchy more accurately than other methods.  We include the results for the S\&P 500 data to give a balanced scientific view, and in appendix~\ref{app:comparing_methods_theory} we discuss why our modelling assumptions may not be appropriate for these data, thus explaining the limitations of algorithm~\ref{alg:ip_hc}.

\begin{table}[!htb]
    \caption{\label{tbl:results}Kendall $\tau_b$ ranking performance measure. For the dot product method,  i.e., algorithm~\ref{alg:ip_hc}, $\bY_{1:n}$ as input corresponds to using $\hat{\alpha}_{\text{data}}$, and $\zeta_{1:n}$ corresponds to $\hat{\alpha}_{\text{pca}}$.  The mean Kendall $\tau_b$ correlation coefficient is reported alongside the standard error (numerical value shown is the standard error$\times 10^{3}$).}
      \centering
  \begin{tabular}{c c| c c c c c}
    \toprule
% Data & Input & Dot product & UPGMA with & HDBSCAN & UPGMA with & Ward \\
%         & & & cosine distance & & Euclidean distance & \\

        Data & Input & Dot product & UPGMA w/ & HDBSCAN & UPGMA w/ & Ward \\
        & & & cos. dist. & & Eucl. dist. & \\
    \midrule

    \multirow{2}{*}{\makecell{Newsgroups}} 
    & $\bY_{1:n}$ & 0.26 (2.9) & 0.26 (2.9) & -0.010 (0.65) &  0.23 (2.7) & 0.18 (2.5) \\
    & $\zeta_{1:n}$ & 0.24 (2.6) & 0.18 (1.9) & -0.016 (1.9) & 0.038 (1.5)  &  0.19 (2.7)  \\
\midrule
    \multirow{2}{*}{\makecell{Zebrafish}} 
    & $\bY_{1:n}$ & 0.34 (3.4) &  0.25 (3.1)  &  0.023 (2.9)  & 0.27 (3.2)  &  0.30 (3.8) \\
    & $\zeta_{1:n}$ &  0.34 (3.4)  & 0.27 (3.2)  & 0.11 (2.8)  &  0.16 (2.5) & 0.29 (3.8)   \\
\midrule
    \multirow{2}{*}{\makecell{Reviews}} 
    & $\bY_{1:n}$ & 0.15 (2.5) &  0.12 (1.9) & 0.014 (1.1) & 0.070 (1.5) &  0.10 (1.8) \\
    & $\zeta_{1:n}$ & 0.14 (2.4) & 0.14 (2.4) & -0.0085 (0.78) & 0.14 (2.6) &  0.12 (2.4)  \\
\midrule
    \multirow{2}{*}{\makecell{S\&P 500}} 
    & $\bY_{1:n}$ &  0.34 (10) &  0.34 (10)  & 0.14 (9.3)  & 0.34 (1)  & 0.35 (10)   \\
    & $\zeta_{1:n}$ & 0.36 (9.4) &  0.42 (11)  & 0.33 (13)  & 0.39 (11)  & 0.39 (11)  \\
    \midrule
    \multirow{2}{*}{\makecell{Simulated}} 
    & $\bY_{1:n}$ &  0.86 (1) &  0.81 (2)  & 0.52 (8)  & 0.52 (8)  & 0.52 (8)   \\
    & $\zeta_{1:n}$ & 0.86 (1) &  0.81 (2)  & 0.52 (8)  & 0.52 (8)  & 0.52 (8)  \\
    \bottomrule
%     \bottomrule
  \end{tabular}
\end{table}
% \begin{table}[!htb]
% \caption*{Table Footnote: * Indicates UPGMA.}
% \end{table}

\subsection{Simulation study of dot product estimation with and without PCA dimension reduction}\label{sec:PCA_results}
For high-dimensional data, reducing  dimension with PCA prior to clustering may reduce overall computational cost. Assuming $\zeta_{1:n}$ are obtained from, e.g., a partial SVD, in time $O(npr)$, the time complexity of evaluating $\alphapca$ is $O(npr+n^2 r)$, versus $O(n^2p)$ for  $\alphadata$, although this ignores the cost of choosing $r$.  In table~\ref{tbl:results} we see for algorithm~\ref{alg:ip_hc}, the  results for input $\bY_{1:n}$ are very similar to  those for  $\zeta_{1:n}$. To examine  this  more closely and connect  our findings to theorem~\ref{thm:consistency}, we now compare $\alphadata$ and $\alphapca$ as estimates of $\alpha$ through simulation. The model is as described at the start of section \ref{sec:numerical}. In figure~\ref{fig:pca_plots}(a)-(b), we  see that when $p$ is growing with  $n$, and when $p$ is constant, the $\alphapca$ error is very slightly smaller than the $\alphadata$ error. By contrast, in figure~\ref{fig:pca_plots}(c), when $n=10$ is fixed, we see that the $\alphapca$ error is larger than that for $\alphadata$. This inferior performance of $\alphapca$ for very small and fixed $n$ is explained by $n$ appearing in the denominator of the second term in the rate $O_{\prob}(\sqrt{nr/p}+ \sqrt{r/n})$ for $\alphapca$ in theorem~\ref{thm:consistency} versus $n$ appearing only in the numerator of $O_{\prob}(n^{2/q}/\sqrt{p})$ for $\alphadata$. Since it is Gaussian, this simulation model has finite exponential-of-quadratic moments, which is a much stronger condition than \textbf{A\ref{ass:moments}}; we conjecture the convergence rate in this Gaussian case is  $O_\prob(\sqrt{\log n/ p})$ for $\alphadata$, which would be consistent with figure~\ref{fig:pca_plots}(a). These numerical results seem to suggest the rate for $\alphapca$ is similar, thus the second result of theorem~\ref{thm:consistency} may not be sharp. 
\begin{figure}[h!]
    \centering
    \includegraphics[width=\textwidth]{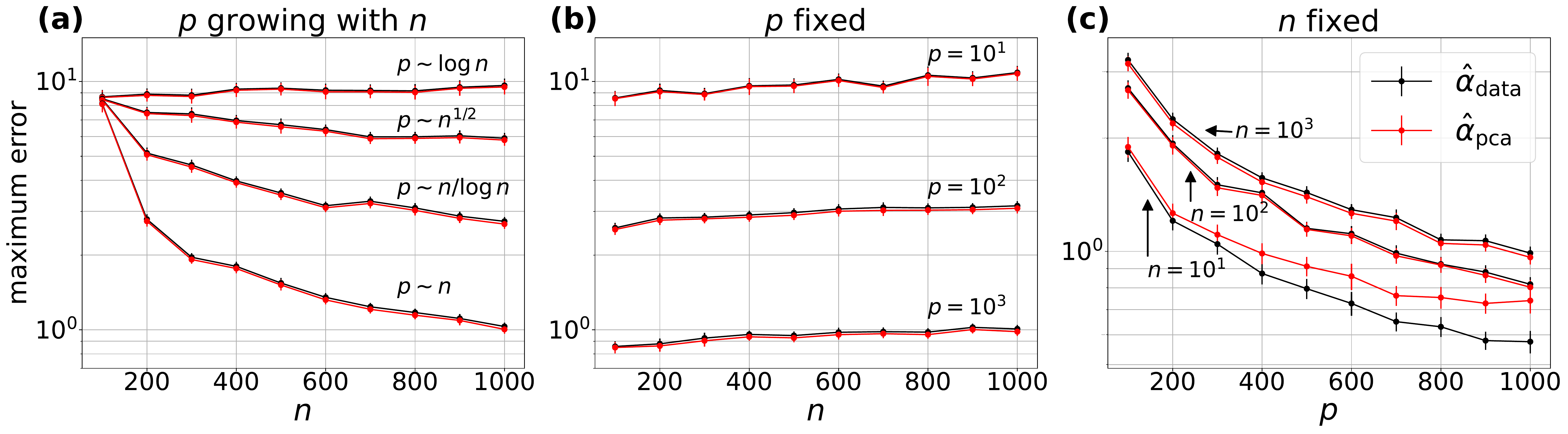}
    \caption{Simulation study of $\alphadata$ and $\alphapca$ as estimators of $\alpha$. All three subplots display the maximum error, $\max_{i,j \in [n], i \neq j}|\alpha(Z_i,Z_j)-\hat\alpha(i,j)|$, for $\hat\alpha=\alphadata$ (black in all subplots (a)-(c)) and $\hat\alpha=\alphapca$ (red). Error bars showing the standard deviation from $100$ simulations are present for all data points, but in some cases are so small they are barely visible. }
    \label{fig:pca_plots}
\end{figure}

% \begin{figure}[t]\label{fig:clustermap}
%     \centering
%     \includegraphics[width=.75\textwidth]{zf_clustermap.pdf}
%     \caption{}
% \end{figure}

\subsection{Comparing dot product affinities and Euclidean distances for  the 20 Newsgroups data}
In this section we expand on the results in table \ref{tbl:results} for the 20 Newsgroups data, by exploring how inter-topic and intra-topic dot product affinities and Euclidean distances relate to ground-truth labels.  Most existing agglomerative clustering techniques quantify dissimilarity using Euclidean distance. To compare dot products and Euclidean distances, figures~\ref{fig:newsgroup}(a)-(b) show, for each topic, the top five topics with the largest average dot product and smallest average Euclidean distance respectively. We see that clustering of semantically similar topic classes is apparent when using dot products but not when using Euclidean distance. Note that the average dot product affinity between comp.windows.x and itself is not shown in figure~\ref{fig:newsgroup}(b), but is shown in \ref{fig:newsgroup}(a), by the highest dark blue square in the comp.windows.x column. In appendix~\ref{sec:app_numerical} we provide additional numerical results illustrating that with $n$ fixed, performance in terms of $\tau_b$ correlation coefficient increases with $p$.

\begin{figure}[t!]
    \centering
    \includegraphics[width=\textwidth]{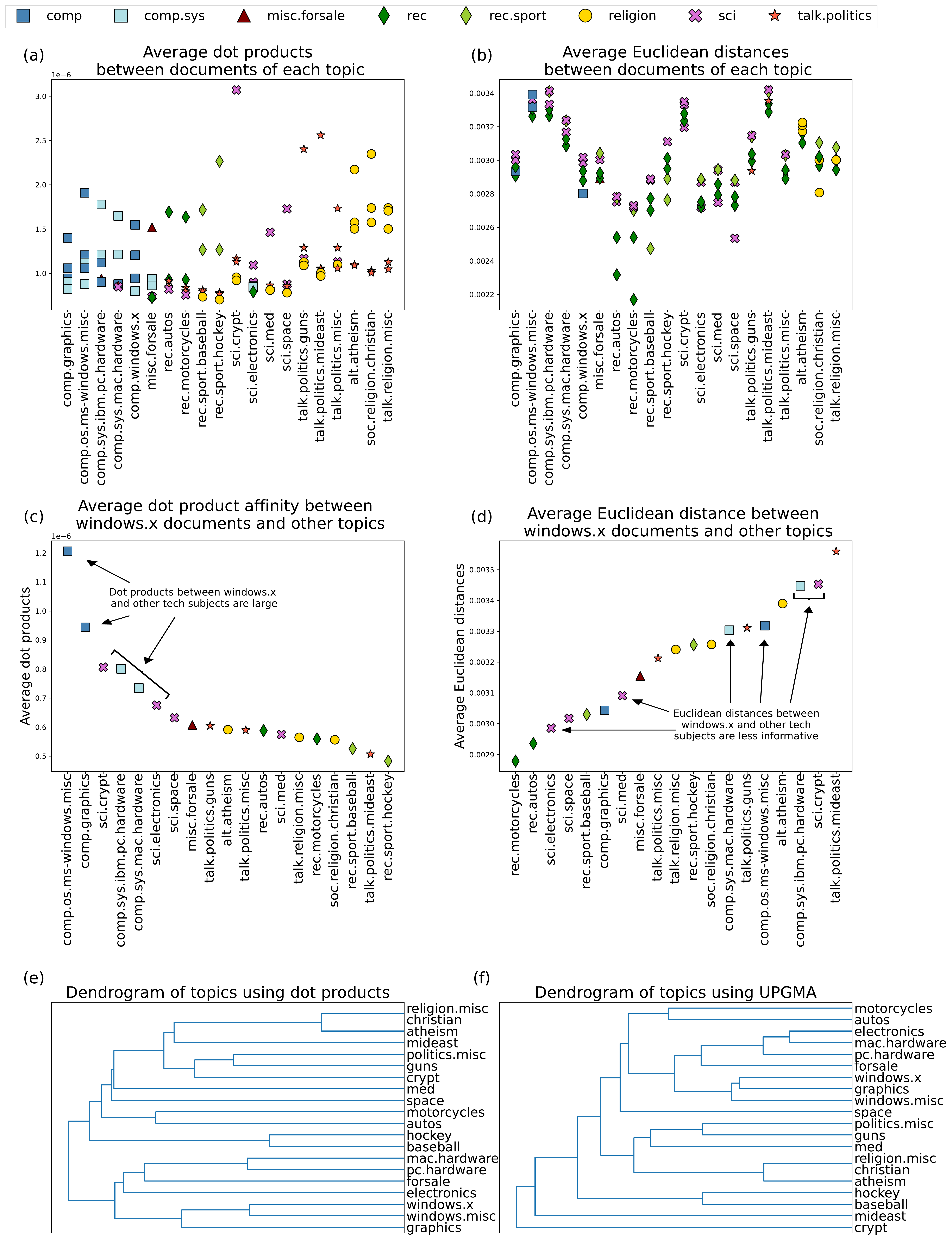}
    \caption{Analysis of the 20 Newsgroups data. Marker shapes correspond to newsgroup classes and marker colours correspond to topics within classes. The first/second columns show results for dot products/Euclidean distances respectively. First row: for each topic ($x$-axis), the affinity/distance ($y$-axis) to the top five best-matching topics, calculated using average linkage of PC scores between documents within topics. Second row: average affinity/distance between documents labelled `comp.windows.x' and all other topics. Third row: dendrograms output from algorithm \ref{alg:ip_hc} and UPGMA applied to cluster topics.}
    \label{fig:newsgroup}
\end{figure}

For one topic (`comp.windows.x') the results are expanded in figures~\ref{fig:newsgroup}(c)-(d) to show the average dot products and average Euclidean distances to all other topics.  Four out of the five topics with the largest dot product affinity belong to the same `comp' topic class and other one is a semantically similar `sci.crypt' topic. Whereas, the other topics in the same `comp' class are considered dissimilar in terms of Euclidean distance.

In order to display visually compact estimated dendrograms, we applied algorithm \ref{alg:ip_hc} and UPGMA in a semi-supervised setting where each topic is assigned its own PC score, taken to be the average of the PC scores of the documents in that topic, and then the algorithms are applied to cluster the topics. The results are shown in figures~\ref{fig:newsgroup}(e)-(f) (for ease of presentation, leaf vertex `heights' are fixed to be equal).   

\section{Limitations and opportunities}\label{sec:limitations}
Our algorithm is motivated by modelling assumptions. If these assumptions are not appropriate for the data at hand, then the algorithm cannot be expected to perform well. A notable limitation of our model is that $\alpha(u,v)\geq0$ for all $u,v\in\cV$ (see lemma~\ref{lem:alpha_positive} in appendix~\ref{sec:appendix_proofs}). This is an inappropriate assumption when there are strong negative cross-correlations between some pairs of data vectors, and may explain why our algorithm has inferior performance on the S\&P 500 data in table~\ref{tbl:results}. Further discussion is given in appendix \ref{app:comparing_methods_theory}. A criticism of agglomerative clustering algorithms in their basic form is that their computational cost scales faster  than $O(n^2)$. Approximations to standard agglomerative  methods which improve computational scalability have been proposed \citep{monath2019scalable,abboud2019subquadratic, moseley2021hierarchical}. Future research could investigate analogous approximations and speed-up of our method. Fairness in hierarchical clustering has been recently studied in cost function-based settings by \citep{ahmadian2020fair} and in greedy algorithm settings by \citep{chhabra2022fair}. Future work could investigate versions of our algorithm which incorporate fairness measures.

% \section*{References}
\newpage
\bibliographystyle{plainnat}
\bibliography{references}

\newpage
\appendix

\section*{Appendices}

\section{Wasserstein dimension selection} \label{sec:wasserstein}
In order to compute the principal component scores $\zeta_1,\ldots,\zeta_n$, one must choose the dimension $r$. Traditionally and somewhat heuristically, this is done by finding the ``elbow'' in the scree plot of eigenvalues. The bias-variance tradeoff associated with choosing $r$ was  explored by \citep{whiteley2022discovering}, who suggested a data-splitting method of dimension selection for high-dimensional data using Wasserstein distances. Whilst this method may be more costly than the traditional ``elbow'' approach, it was empirically demonstrated in \citep{whiteley2022discovering} to have superior performance.
\begin{algorithm}[h!]
 \caption{Wasserstein PCA dimension selection \citep{whiteley2022discovering}}\label{alg:dim_select}
% \vspace{-1em}
 \begin{flushleft}
  \textbf{Input:} data vectors $\bY_1,\ldots,\bY_n\in\mathbb{R}^{p}$.
 \end{flushleft}

\begin{algorithmic}[1]
  %\STATE Split the data as $\mathbf{Y_1}^{(1)}, \coloneqq \mathbf{Y}_{1:\lceil n/2 \rceil, 1:p}$, $\mathbf{Y}^{(2)} \coloneqq \mathbf{Y}_{(\lceil n/2 \rceil + 1): n, 1:p}$
  \FOR{$r\in\{1,...,\min(n,p)\}$} 
  \STATE Let $\mathbf{V} \in \mathbb{R}^{p\times r}$ denote the matrix whose columns  are orthonormal eigenvectors associated with the $r$ largest eigenvalues of $\sum_{i=1}^{\lceil n/2 \rceil }\bY_i\bY_i^\top$
  \STATE Orthogonally project  $\bY_1,\ldots,\bY_{\lceil n/2 \rceil }$ onto the column space of  $\mathbf{V}$,  $\hat{\bY}_i \coloneqq \mathbf{V} \mathbf{V}^{\top}\bY_i$
  \STATE Compute Wasserstein distance $d_r$ between $\hat{\bY}_i,\ldots,\hat{\bY}_{\lceil n/2 \rceil}$ and $\bY_{\lceil n/2 \rceil +1},\ldots,\bY_{n}$ (as point sets in $\mathbb{R}^p$)
  \ENDFOR
\end{algorithmic}

 \begin{flushleft}
  \textbf{Output:} selected dimension $\hat r = \text{argmin } \{d_r\}$.
 \end{flushleft}

 \end{algorithm}

 We note that in practice, the eigenvectors appearing in this procedure could be computed sequentially as $r$ grows, and to limit computational cost one might consider $r$ only  up to some $r_{\mathrm{max}}<\min(n,p)$.
% \begin{algorithm}[h!]
%  \caption{PCA dimension selection \citep{whiteley2022discovering}}\label{alg:dim_select}
% % \vspace{-1em}
%  \begin{flushleft}
%   \textbf{Input:} data vectors $\bY_1,\ldots,\bY_n\in\mathbb{R}^{p}$.
%  \end{flushleft}

% \begin{algorithmic}[1]
%   \STATE Split the data as $\mathbf{Y_1}^{(1)}, \coloneqq \mathbf{Y}_{1:\lceil n/2 \rceil, 1:p}$, $\mathbf{Y}^{(2)} \coloneqq \mathbf{Y}_{(\lceil n/2 \rceil + 1): n, 1:p}$
%   \FOR{$r\in\{1,...,\min(n,p)\}$} 
%   \STATE Let $\mathbf{V}^{(1)} \in \mathbb{R}^{p\times r}$ denote the matrix of orthogonal eigenvectors associated with the $r$ largest eigenvalues of $\mathbf{Y}^{(1)\top}\mathbf{Y}^{(1)}$
%   \STATE Project  $\mathbf{Y}^{(1)}$ onto the linear span of  $\mathbf{V}^{(1)}$,  $\hat{\mathbf{X}}^{(1)} \coloneqq \mathbf{Y}^{(1)} \mathbf{V}^{(1)} \mathbf{V}^{(1)\top}$
%   \STATE Compute Wasserstein distance $d_r$ between $\hat{\mathbf{X}}^{(1)}$ and $\mathbf{Y}^{(2)}$ (as point sets in $\mathbb{R}^p$)
%   \ENDFOR
% \end{algorithmic}

%  \begin{flushleft}
%   \textbf{Output:} selected dimension $\hat r = \text{argmin } \{d_r\}$.
%  \end{flushleft}

%  \end{algorithm}

\section{Implementation using scikit-learn} \label{sec:append_implent}

Algorithm~\ref{alg:ip_hc} can be implemented using the Python module scikit-learn \citep{pedregosa2011scikit} via their \verb|AgglomerativeClustering| class, using the standard `average' linkage criterion and a custom metric function to compute the  desired affinities. However, \verb|AgglomerativeClustering| merges clusters based on minimum distance metric, whereas algorithm~\ref{alg:ip_hc} merges according to maximum dot product. Therefore, the custom metric function we used in our implementation calculates all the pairwise dot products and subtracts them from the maximum. This transformation needs to then be rectified if accessing the merge heights. 

All code released as part of this paper is under the MIT License and can be found at \url{https://github.com/anniegray52/dot_product_hierarchical}

\section{Proofs and supporting theoretical results}\label{sec:appendix_proofs}   

\begin{lem}\label{lem:height_well_defined}
The height function $h(v)\coloneqq \alpha(v,v)$ satisfies $h(v)\geq h(\mathrm{Pa}_v)$ for all $v\in\cV$ except the root.
\end{lem}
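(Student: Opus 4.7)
The plan is to unpack the definition $h(v) = \alpha(v,v) = \frac{1}{p}\sum_{j=1}^p \mathbb{E}[X_j(v)^2]$ and reduce the claim to a per-coordinate inequality $\mathbb{E}[X_j(v)^2] \geq \mathbb{E}[X_j(\mathrm{Pa}_v)^2]$ for every $j \in \{1,\ldots,p\}$. Once this coordinatewise bound is established, summing over $j$ and dividing by $p$ gives the lemma immediately.

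To prove the coordinatewise bound, I would invoke the tower property together with the martingale-like assumption \textbf{A\ref{ass:martingale}}, which says $\mathbb{E}[X_j(v) \mid X_j(\mathrm{Pa}_v)] = X_j(\mathrm{Pa}_v)$. Writing
\[
\mathbb{E}[X_j(v)^2] = \mathbb{E}\bigl[\mathbb{E}[X_j(v)^2 \mid X_j(\mathrm{Pa}_v)]\bigr] \geq \mathbb{E}\bigl[(\mathbb{E}[X_j(v)\mid X_j(\mathrm{Pa}_v)])^2\bigr] = \mathbb{E}[X_j(\mathrm{Pa}_v)^2],
\]
where the middle inequality is conditional Jensen applied to the convex function $x\mapsto x^2$, yields exactly what is needed. (Implicitly, \textbf{A\ref{ass:moments}} guarantees that these second moments are finite so the manipulations are valid.)

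There is essentially no technical obstacle here; this is a one-line application of Jensen's inequality per coordinate, and the main thing to be careful about is just making sure the martingale property is applied on the correct sigma-algebra and that the sum over coordinates is handled with the $1/p$ normalization from the definition of $\alpha$.
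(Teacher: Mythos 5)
Your proof is correct and rests on the same ingredient as the paper's: the martingale property \textbf{A\ref{ass:martingale}} applied through the tower property, reducing the claim to a second-moment comparison between $X_j(v)$ and $X_j(\mathrm{Pa}_v)$. The paper phrases the final step slightly differently — it expands $\|\bX(v)\|^2=\|(\bX(v)-\bX(\mathrm{Pa}_v))+\bX(\mathrm{Pa}_v)\|^2$ and uses \textbf{A\ref{ass:martingale}} to kill the cross term, which identifies the gap exactly as $h(v)-h(\mathrm{Pa}_v)=p^{-1}\E[\|\bX(v)-\bX(\mathrm{Pa}_v)\|^2]\geq 0$, whereas your coordinatewise conditional Jensen step yields only the inequality — but the two computations are equivalent, so this is essentially the same argument.
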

\begin{proof}
\begin{align*}
h(v)&=\frac{1}{p}\E[\|\bX(v)-\bX(\text{Pa}_v)+\bX(\text{Pa}_v)\|^2]\\
&=\frac{1}{p}\E[\|\bX(v)-\bX(\text{Pa}_v)\|^2]+2\frac{1}{p}\E[\langle\bX(v)-\bX(\text{Pa}_v),\bX(\text{Pa}_v)\rangle]+h(\text{Pa}_v)\geq h(\text{Pa}_v),
\end{align*}
where \textbf{A\ref{ass:martingale}} combined with the tower property and linearity of conditional expectation implies $\E[\langle\bX(v)-\bX(\text{Pa}_v),\bX(\text{Pa}_v)\rangle]=0$.
\end{proof}

\begin{lem}\label{lem:alpha_positive}
For all $u,v\in\mathcal V$, $\alpha(u,v)\geq0$. 
\end{lem}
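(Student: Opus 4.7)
The plan is to reduce the claim to a direct consequence of Lemma~\ref{lem:merge_height} together with the definition of the height function $h(v)\coloneqq\alpha(v,v)$.

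First I would note that, by the structural assumptions on $\cT$, every pair of vertices admits a most recent common ancestor. Indeed, $\cT$ is connected (ignoring edge directions) and each vertex has at most one parent, so there is a unique root, all directed edges point away from it, and for any $u,v\in\cV$ the intersection of the directed paths from the root to $u$ and to $v$ contains at least the root itself. Let $w$ denote the vertex furthest from the root lying on both paths, i.e.\ the most recent common ancestor of $u$ and $v$.

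Next I would apply Lemma~\ref{lem:merge_height}, which gives $\alpha(u,v) = m(u,v) = h(w)$. By the definition of the height function used throughout the paper, $h(w) = \alpha(w,w) = \tfrac{1}{p}\,\E[\langle \bX(w),\bX(w)\rangle] = \tfrac{1}{p}\,\E[\|\bX(w)\|^2]$. Since this is the expectation of a non-negative quantity, it is non-negative, and therefore $\alpha(u,v)\geq 0$, as required.

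There is no real obstacle here: the only thing to verify carefully is the existence of a most recent common ancestor, which follows immediately from the tree structure (connectedness plus the at-most-one-parent condition forcing a unique root). Everything else is an application of already-established identities.
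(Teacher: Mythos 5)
Your proof is correct and rests on the same underlying fact as the paper's: the paper proves the lemma by redoing the conditional-independence/martingale computation showing $\E[\langle\bX(u),\bX(v)\rangle]=\sum_j\E[X_j(w)^2]\geq 0$, which is exactly the computation inside the proof of lemma~\ref{lem:merge_height} that you invoke to write $\alpha(u,v)=h(w)=\tfrac{1}{p}\E[\|\bX(w)\|^2]\geq 0$. Your routing through lemma~\ref{lem:merge_height} is a legitimate shortcut (no circularity, since that lemma's proof does not use this one), and your check that a most recent common ancestor exists is sound.
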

\begin{proof}
If $u=v$, $\alpha(u,v)\geq0$ holds immediately from the definition of $\alpha$ in \eqref{eq:alpha_defn}. For $u\neq v$ with most recent common ancestor $w$,
\begin{align*}
   \mathbb{E}[\langle \bX(u),\bX(v)\rangle]&=\sum_{j=1}^p\mathbb{E}\left[\mathbb{E}[ X_j(u)X_j(v)|X_j(w)]\right]\\
   &=\sum_{j=1}^p\mathbb{E}\left[\mathbb{E}[ X_j(u)|X_j(w)]\mathbb{E}[ X_j(v)|X_j(w)]\right]\\
   &=\sum_{j=1}^p \mathbb{E}[|X_j(w)|^2] \geq0,
\end{align*}
where the second equality uses \textbf{A\ref{ass:conditional_indep}} together with standard conditional independence aguments, and the third equality uses \textbf{A\ref{ass:martingale}}.

\end{proof}

\begin{proof}[Proof of lemma~\ref{lem:merge_height}]
Let $w$ be the most recent common ancestor of $u$ and $v$. For each $j = 1,\dots, p$, the property \textbf{A\ref{ass:conditional_indep}} together with standard conditional independence arguments imply that $X_j(u)$ and $X_j(v)$ are conditionally independent given $X_j(w)$, and the property \textbf{A\ref{ass:martingale}} implies that $\E[X_j(u) | X_j(w)]=\E[X_j(v) | X_j(w)]=X_j(w)$. Therefore, by the tower property of conditional expectation,
\begin{align*}
\E[ X_j(u)X_j(v) ] &= \E\left[\E[X_j(u)X_j(v)| X_j(w)]\right]\\
&= \E\left[\E[  X_j(u)| X_j(w)]\E[X_j(v)| X_j(w)]\right]\\
&= \E[X_j(w)^2].
\end{align*}
Hence, using the definitions of the merge height $m$, the height $h$ and the affinity $\alpha$,
    \begin{align*}
    m(u,v) = h(w)=\alpha(w,w) = \frac{1}{p} \sum_{j=1}^p  \E[X_j(w)^2] =  \frac{1}{p} \sum_{j=1}^p \E [X_j (u) X_j (v)] = \alpha(u,v),  
    \end{align*}
    which proves the first equality in the statement. The second equality is the definition of $\alpha$. 
    
    For the third equality in the statement, we have
    \begin{align*} 
    d(u,v) &= h(u)+h(v)-2h(w)\\
    &=\alpha(u,u)+\alpha(v,v)-2\alpha(u,v)\\
    &= \frac{1}{p}\mathbb{E}\left[\langle \bX(u), \bX(u) \rangle\right] + \frac{1}{p}\mathbb{E}\left[\langle \bX(v), \bX(v) \rangle\right] - 2\frac{1}{p}\mathbb{E}\left[\langle \bX(u), \bX(v) \rangle\right]\\
    &=\frac{1}{p}\mathbb{E}\left[\|\bX(u)- \bX(v)\|^2\right], 
    \end{align*}
    where the first equality uses the definition of $d$, and the second equality uses the definition of $h$ and $h(w)=m(u,v)=\alpha(u,v)$.
\end{proof}

\subsection{Proof of Theorem~\ref{thm:stability}}

The following lemma establishes an identity concerning the affinities computed in algorithm~\ref{alg:ip_hc} which will be used in the proof of theorem~\ref{thm:stability}.

\begin{lem}\label{lem:alpha_hat_identity}
Let $P_m$, $m\geq 0$, be the sequence of partitions of $[n]$ constructed in algorithm~\ref{alg:ip_hc} . Then for any $m\geq 0$,
\begin{equation}
\hat{\alpha} (u,v) = \frac{1}{|u||v|}\sum_{i\in u, j\in v} \hat{\alpha} (i,j),\quad\text{for all distinct pairs } u, v\in P_m.\label{eq:hat_f_written_out}
\end{equation}
\end{lem}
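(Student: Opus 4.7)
The plan is to prove this by induction on $m$, exploiting the recursive definition of $\hat\alpha$ on merged clusters given in algorithm~\ref{alg:ip_hc}.

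For the base case $m=0$, every $u,v\in P_0$ is of the form $\{i\},\{j\}$ with $|u|=|v|=1$, so both sides of \eqref{eq:hat_f_written_out} reduce to $\hat\alpha(\{i\},\{j\})=\hat\alpha(i,j)$ by the notational shorthand adopted in section~\ref{sec:algorithm}.

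For the inductive step, I would suppose \eqref{eq:hat_f_written_out} holds at stage $m-1$, and consider the merge that produces $P_m$ from $P_{m-1}$: there exist distinct $u',v' \in P_{m-1}$ with $w \coloneqq u'\cup v' \in P_m$ and $P_m = (P_{m-1}\setminus\{u',v'\})\cup\{w\}$. For any distinct pair $u,v\in P_m$ there are two cases. If neither $u$ nor $v$ equals $w$, then $u,v\in P_{m-1}$ and $\hat\alpha(u,v)$ is unchanged at stage $m$, so the identity follows directly from the inductive hypothesis. If one of them is $w$, say $u=w$ and $v\in P_{m-1}\setminus\{u',v'\}$, I would apply the update rule from step~6 of algorithm~\ref{alg:ip_hc}:
\begin{equation*}
\hat\alpha(w,v) = \frac{|u'|}{|w|}\hat\alpha(u',v) + \frac{|v'|}{|w|}\hat\alpha(v',v),
\end{equation*}
and then substitute the inductive hypothesis into both terms, which produces $\tfrac{1}{|w||v|}$ times a sum over $i\in u',j\in v$ plus a sum over $i\in v',j\in v$. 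Because $w=u'\cup v'$ is a disjoint union (an invariant of algorithm~\ref{alg:ip_hc}, since $P_{m-1}$ is a partition), these two sums combine into $\sum_{i\in w,j\in v}\hat\alpha(i,j)$, giving the required identity.

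There is no real obstacle here; the only things to be careful about are the disjointness of $u'$ and $v'$ (which is exactly what makes $|w|=|u'|+|v'|$ and turns the two sums into one sum over $w$), and the fact that the notational shorthand $\hat\alpha(i,j)=\hat\alpha(\{i\},\{j\})$ must be used consistently at the leaves. I would state disjointness explicitly as a side remark, noting that it is preserved under the merge operation because $P_{m-1}$ is a partition of $[n]$.
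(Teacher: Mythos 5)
Your proof is correct and follows essentially the same route as the paper's: induction on $m$, with the inductive step split into the case where both clusters survive from $P_{m-1}$ and the case where one is the newly merged $w$, resolved by substituting the inductive hypothesis into the update rule of step~6. The only difference is that you make the disjointness of $u'$ and $v'$ explicit, which the paper leaves implicit in the final equality.
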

\begin{proof} The proof is by induction on $m$. With $m=0$, \eqref{eq:hat_f_written_out} holds immediately since $P_0=\{\{1\},\ldots,\{n\}\}$. Now suppose \eqref{eq:hat_f_written_out} holds at step $m$. Then for any distinct pair $w,w^\prime\in P_{m+1}$, either $w$ or $w^\prime$ is the result of merging two elements of $P_{m}$, or $w$ and $w^\prime$ are both elements of $P_m$. In the latter case the induction hypothesis immediately implies:
$$
\hat{\alpha} (w,w^\prime) = \frac{1}{|w||w^\prime|}\sum_{i\in w, j\in w^\prime} \hat{\alpha} (i,j).
$$
In the case that $w$ or $w^\prime$ is the result of a merge, suppose w.l.o.g.  that $w=u\cup v$ for some $u,v\in P_m$ and $w^\prime \in P_m$. Then by definition of $\hat{\alpha}$ in algorithm~\ref{alg:ip_hc} ,
\begin{align*}
\hat{\alpha} (w,w^\prime) &= \frac{|u|}{|w|} \hat{\alpha} (u,w^\prime) + \frac{|v|}{|w|} \hat{\alpha} (v,w^\prime)\\
& = \frac{|u|}{|w|}\frac{1}{|u||w^\prime|} \sum_{i\in u, j\in w^\prime} \hat{\alpha}(i,j) + \frac{|v|}{|w|}\frac{1}{|v||w^\prime|} \sum_{i\in v, j\in w^\prime} \hat{\alpha}(i,j)\\
&=\frac{1}{|w||w^\prime|}\sum_{i\in w, j\in w^\prime} \hat{\alpha} (i,j),
\end{align*}
where the final equality uses $w=u\cup v$. The induction hypothesis thus holds at step $m+1$.
\end{proof}

The following proposition establishes the validity of the height function constructed in algorithm~\ref{alg:ip_hc}. Some of the arguments used in this proof are qualitatively similar to those used to study reducible linkage functions by, e.g., \citet{sumengen2021scaling}, see also historical references therein.

\begin{prop}\label{prop:validity}
With $\hat\cV$ the vertex set and $\hat h$ the height function  constructed in algorithm~\ref{alg:ip_hc} with any symmetric, real-valued input $\hat{\alpha}(\cdot,\cdot)$, it holds that $\hat h (v)\geq \hat h(\mathrm{Pa}_v)$ for all vertices $v\in\hat\cV$ except the root.
\end{prop}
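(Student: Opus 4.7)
The plan is to show that the maximum pairwise affinity in the current partition is non-increasing across the merges of algorithm~\ref{alg:ip_hc}, and then use this monotonicity to compare $\hat h$ at a vertex with $\hat h$ at its parent.

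First, I would dispose of the leaf case: for $v\in P_0$, the definition $\hat h(v)\coloneqq\max\{\hat h(\mathrm{Pa}_v),\hat\alpha(v,v)\}$ gives the required inequality immediately. So the whole task reduces to showing $\hat\alpha(v,v)\geq\hat\alpha(\mathrm{Pa}_v,\mathrm{Pa}_v)$ for every internal, non-root $v\in\hat\cV\setminus P_0$. Observe that by construction each such internal vertex $v$ is created at some step, say step $m$, by merging a pair $u_1,u_2\in P_{m-1}$ that attains the maximum of $\hat\alpha$ on $P_{m-1}$, and then $\hat\alpha(v,v)=\hat\alpha(u_1,u_2)$. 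Likewise, the parent $\mathrm{Pa}_v$ is created at some later step $m'>m$ by merging $v$ with another vertex $v''\in P_{m'-1}$, and $\hat\alpha(\mathrm{Pa}_v,\mathrm{Pa}_v)=\hat\alpha(v,v'')$. So the target inequality is exactly $\hat\alpha(u_1,u_2)\geq\hat\alpha(v,v'')$.

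The key step is the following claim, proved by induction on $k$: for all $k\geq m$,
\[
\max_{a\neq b,\ a,b\in P_k}\hat\alpha(a,b)\ \leq\ \hat\alpha(u_1,u_2).
\]
The base case $k=m$ splits into two subcases. If $a,b\neq v$, then $a,b\in P_{m-1}\setminus\{u_1,u_2\}$, so $\hat\alpha(a,b)\leq\hat\alpha(u_1,u_2)$ by the greedy choice at step $m$. If $a=v$ and $b\in P_{m-1}\setminus\{u_1,u_2\}$, the update rule in step~6 of the algorithm gives
\[
\hat\alpha(v,b)=\tfrac{|u_1|}{|v|}\hat\alpha(u_1,b)+\tfrac{|u_2|}{|v|}\hat\alpha(u_2,b),
\]
and each term on the right is $\leq\hat\alpha(u_1,u_2)$ by the greedy choice, so the convex combination is as well. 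The inductive step is essentially identical: if the bound holds on $P_k$ and we form $P_{k+1}$ by merging some pair $p,q\in P_k$ into $r=p\cup q$, then affinities between members of $P_{k+1}\setminus\{r\}$ are inherited from $P_k$, while $\hat\alpha(r,\cdot)$ is the same convex combination of $\hat\alpha(p,\cdot)$ and $\hat\alpha(q,\cdot)$, each bounded by $\hat\alpha(u_1,u_2)$ by the inductive hypothesis. Since $v,v''\in P_{m'-1}$ and $m'-1\geq m$, applying the claim at $k=m'-1$ yields $\hat\alpha(v,v'')\leq\hat\alpha(u_1,u_2)$, which is what we needed.

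The main subtlety, and the only place I expect to have to argue carefully, is to verify that the algorithm's average-type update rule really does preserve the maximum bound across a merge; this is the convex-combination step above, and is where the specific form of $\hat\alpha(w,\cdot)$ in step~6 (rather than, say, single or complete linkage) enters decisively. Everything else is bookkeeping: tracking which step creates $v$ versus $\mathrm{Pa}_v$, and noting that partition sizes $|u_1|,|u_2|,|v|$ are strictly positive so the convex combination is well-defined. No other results from the excerpt are required for this argument, although lemma~\ref{lem:alpha_hat_identity} gives an alternative, more explicit route (writing $\hat\alpha(a,b)$ as a uniform average over singleton affinities) that could be used as a sanity check.
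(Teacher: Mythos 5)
Your proposal is correct and follows essentially the same route as the paper's proof: both reduce to the non-leaf case, exploit that the greedy merge pair attains the maximum affinity, and use the convex-combination form of the step-6 update to show the maximum pairwise affinity over the partition is non-increasing across merges. The only cosmetic difference is that you run an explicit induction bounding $\max_{a\neq b\in P_k}\hat\alpha(a,b)$ by $\hat\alpha(u_1,u_2)$ for all $k\geq m$, whereas the paper establishes the one-step monotonicity $\hat h(w_m)\geq\hat h(w_{m+1})$ and chains it.
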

\begin{proof}
The required inequality $\hat h (v)\geq \hat h(\text{Pa}_v)$  holds immediately for all the leaf vertices $v\in P_0 =\{\{1\}, \ldots, \{n\}\}$ by the definition of $\hat h$ in algorithm~\ref{alg:ip_hc}. All the remaining vertices in the output tree, i.e., those in $\hat\cV \setminus P_0$, are formed by merges over the course of the algorithm. For $m\geq 0$ let $w_m = u_m \cup v_m$ denote the vertex formed by merging some $u_m,v_m\in P_{m}$. Then $w_m = \text{Pa}_{u_m}$ and $w_m = \text{Pa}_{v_m}$. Each $u_m$    is either a member of $P_0$ or equal to $w_{m^\prime}$ for some $m^\prime <m$. The same is true of each $v_m$. It therefore suffices to show that $\hat h(w_{m})\geq \hat h(w_{m+1})$ for $m\geq 0$, where by definition in the algorithm, $\hat h(w_{m})=\hat{\alpha}(u_m,v_m)$. Also by definition  in the algorithm,  $\hat h(w_{m+1})$ is the largest pairwise affinity between elements of $P_{m+1}$. Our objective therefore is to upper-bound this largest affinity and compare it to $\hat h(w_{m})=\hat{\alpha}(u_m,v_m)$.

The affinity between $w_m = u_m \cup v_m$ and any other element $w^\prime$ of $P_{m+1}$ (which must also be an element of $P_m$) is, by definition in the algorithm,
\begin{align*}
\hat{\alpha}(w_m,w^\prime) &= \frac{|u_m|}{|u_m|+|v_m|} \hat{\alpha}(u_m,w^\prime) +\frac{|v_m|}{|u_m|+|v_m|} \hat{\alpha}(v_m,w^\prime)\\
&\leq \max \{\hat{\alpha}(u_m,w^\prime),\hat{\alpha}(v_m,w^\prime)\} \\&\leq \hat{\alpha}(u_m,v_m),
\end{align*}
where the last inequality holds because $u_m,v_m$, by definition, have the largest affinity amongst all elements of $P_m$. For the same reason, the affinity between any two distinct elements of $P_{m+1}$ neither of which is $w_m$ (and therefore both of which are  elements of $P_{m}$) is upper-bounded by $\hat{\alpha}(u_m,v_m)$. We have therefore established $\hat h(w_{m}) = \hat{\alpha}(u_m,v_m) \geq \hat h(w_{m+1})$ as required, and this completes the proof.

\end{proof}

\begin{proof}[Proof of theorem~\ref{thm:stability}]
Let us introduce some definitions used throughout the proof.
\begin{equation}M \coloneqq \max_{i \neq j} |\hat{\alpha}(i,j) - \alpha(z_i, z_j)|.\label{eq:M_defn}
\end{equation}
We arbitrarily chose and then fix $i,j \in [n]$ with $i \neq j$, and define 
\begin{equation}
H\coloneqq m(z_i,z_j),\qquad\hat H \coloneqq \hat{m}(i,j).\label{eq:H_defn}    
\end{equation}
 Let $u$ denote the most recent common ancestor of the leaf vertices $\{i\}$ and $\{j\}$ in $\hat \cD$ and let $m\geq1$ denote the step of the algorithm at which $u$ is created by a merge, that is $m  = \min\{m^\prime\geq 1: u\in P_{m^\prime}\}$. We note that by construction, $u$ is equal to the union of all leaf vertices with ancestor $u$, and by definition of $\hat{h}$ in algorithm~\ref{alg:ip_hc}  $\hat{h}(u)=\hat H$.
 
  Let $v$ denote the most recent common ancestor of $z_i$ and $z_j$ in $\cD$, which has height $h(v)=H$.

%***OLD the set of all leaf vertices with ancestor $u$ is the cell $C_u$, formed at step $m=u-n$. 

\paragraph{Lower bound on  $m(z_i, z_j)-\hat{m}(i,j)$.} 
There is no partition of $u$ into two non-empty sets $A, B \subseteq[n]$ such that
$\hat{\alpha} (k,l) < \hat H$ for all $k \in  A$ and $l \in B$. We prove this by contradiction. 
Suppose that such a partition exists. There must be a step $m' \leq m$ at which some $A' \subseteq A$ is merged some $B' \subseteq B$. The vertex $w$ formed by this merge would have height
\begin{align*}
\hat{h}(w)&=\hat{\alpha}(A^\prime,B^\prime)\\
& = \frac{1}{|A'||B'|}\sum_{k \in A', l \in B'} \hat{\alpha}(k,l) < \hat H = \hat{h}(u),
\end{align*}
where the first equality is the definition of $\hat{h}(w)$ in the algorithm and the second equality holds by lemma~\ref{lem:alpha_hat_identity}. However, in this construction $u$ is an ancestor of $w$, and $\hat{h}(w)< \hat{h}(u)$ therefore contradicts the result of proposition \ref{prop:validity}.

As a device to be used in the next step of the proof,  consider an undirected graph with vertex set $u$, in which there is an edge between two vertices $k$ and $l$ if and only if $\hat{\alpha} (k,l) \geq \hat H$. Then, because there is no partition as established above, this graph must be connected. Now consider a second undirected graph, also with vertex set $u$, in which there is any edge between two vertices $k$ and $l$ if and only if $\alpha(z_k,z_l) \geq \hat H - M$. Due to the definition of $M$ in \eqref{eq:M_defn}, any edge in the first graph is an edge in the second, so the second graph is connected too. Let $k$, $l$, and $\ell$ be any distinct members of $u$. Using the fact established in lemma~\ref{lem:merge_height} that $\alpha(z_k,z_l)$ and $\alpha(z_l,z_\ell)$ are respectively the merge heights in $\cD$ between $z_k$ and $z_l$, and $z_l$ and $z_\ell$, it can be seen that if there are edges between $k$ and $l$ and between $l$ and $\ell$ in the second graph, there must also be an edge in that graph between $k$ and $\ell$. Combined with the connectedness, this implies that the second graph is complete, so that $\alpha(z_k,z_l) \geq \hat H - M$ for all distinct $k,l \in u$. In particular $ \alpha(z_i, z_j) \geq \hat H - M$, and since $m(z_i, z_j) = \alpha(z_i, z_j)$, we find 
\begin{equation}
m(z_i, z_j) - \hat{m}(i, j) \geq - M.\label{eq:m_lowerbound}
\end{equation}

\paragraph{Upper bound on $m(z_i,z_j) - \hat{m}(i,j)$.} 
Let $S_v = \{i \in [n]: \text{$z_i = v$ or $z_i$ has ancestor $v$ in $\cD$}\}$. For $k,l \in S_v$, lemma~\ref{lem:merge_height} tells us $\alpha(z_k,z_l)$ is the merge height between $z_k$ and $z_l$, so $\alpha(z_k,z_l) \geq H$. Using \eqref{eq:M_defn}, we therefore have
\begin{equation}
\hat{\alpha}(k,l) \geq H - M,\quad\forall k,l\in S_v.\label{eq:hat_alpha_lower_bound_S_v}  \end{equation}
It follows from the definition of $\hat{h}$ in the algorithm that if $S_v = [n]$,  the heights of all vertices in $\hat \cD$ are  greater than or equal to $H - M$. This implies $\hat H \geq H - M$. In summary, we have shown that when $S_v = [n]$,
\begin{equation}
m(z_i,z_j) - \hat{m}(i,j)\leq M.\label{eq:upper_bound_1}
\end{equation}
It remains to consider the case $S_v \neq [n]$. The proof of the same upper bound \eqref{eq:upper_bound_1} in this case is more involved. In summary, we need to establish that the most recent common ancestor of $\{i\}$ and $\{j\}$ in $\hat{\cD}$ has height at least $H-M$. The main idea of the proof is to consider the latest step of the algorithm at which a vertex with height at least $H-M$ is formed by a merge, and show the partition formed by this merge contains the most recent common ancestor of $\{i\}$ and $\{j\}$, or an ancestor thereof.

To this end let $m^*$ denote the latest step  in algorithm~\ref{alg:ip_hc}  at which the vertex formed, $w^*$, has height greater than or equal to $H-M$.  To see that $m^*$ must exist, notice
\begin{equation}
\max_{k\neq l\in[n]} \hat{\alpha}(k,l) \geq \alpha(z_i,z_j) -M,\label{eq:alpha_hat_max_lb}
\end{equation}
by definition of $M$ in \eqref{eq:M_defn}. 
Combined with the definition of $\hat h$ in algorithm~\ref{alg:ip_hc}, the vertex formed by the merge at step $1$ of the algorithm therefore has height greater than or equal to $H-M$. Therefore $m^*$ is indeed well-defined. 

Our next objective is to show that the partition $P_{m^*}$ formed at step $m^\star$ contains an element which itself contains both $i$ and $j$.  We proceed by establishing some facts about $S_v $ and $P_{m^*}$.

%the affinity between $C_i$ and $C_j$ is $\geq H-b$. This is one of the affinities, of which algorithm~\ref{alg:ip_hc}  chooses the maximum, in step 1.

Let $\bar S_v \coloneqq [n]\setminus S_v$. For $k\in S_v$, $l \in \bar S_v$, $v$ cannot be an ancestor of $z_l$, by lemma~\ref{lem:merge_height} $\alpha(z_k,z_l)$ is the merge height of $z_k$ and $z_l$,  and $b$ is the minimum branch length in $\cD$, so we have $\alpha(z_k,z_l) \leq H - b$. From \eqref{eq:M_defn} we then find 
\begin{equation}
\hat{\alpha}(k,l) \leq H -b + M,\quad\forall\, k\in S_v,l\in\bar{S}_v. \label{eq:hat_alpha_lb_s_s_bar}
\end{equation}

We claim that no element of $P_{m^*}$ can contain both an element of $S_v$ and an element of $\bar S_v$. We prove this claim by contradiction. If such an element of $P_{m^*}$  did exist, there would be a step $m^\prime \leq m^*$ at which some $A^\prime\subseteq S_v$ is merged with some $B^\prime \subseteq \bar S_v$. But the vertex $w^\prime$ formed by this merge would be assigned height $\hat{h}(w^\prime) = \hat{\alpha}(A^\prime,B^\prime)\leq H - b + M < H-M$, where the first inequality uses lemma~\ref{lem:alpha_hat_identity} and \eqref{eq:hat_alpha_lb_s_s_bar}, and the second inequality uses the assumption of the theorem that $M < b/2$. Recalling the definition of $w^\star$ we have $\hat{h}(w^*)\geq H-M$. We therefore see that $w^*$ is an ancestor of $w^\prime$ with $\hat{h}(w^*) > \hat{h}(w^\prime)$, contradicting the result of proposition~\ref{prop:validity}. 

%Since we are assuming $S_v \neq [n]$ it also follows that $m^*$ is not the final step, $n-1$, of algorithm~\ref{alg:ip_hc}, a point we will use below.

%**clarify use of the above

Consider the elements of $P_{m^*}$, denoted $A$ and $B$, which contain $i$ and $j$ respectively. We claim that $A = B$. We prove this claim by contradiction. Suppose $A \neq B$. As established in the previous paragraph, neither $A$ nor $B$ can contain an element 
 of $\bar S_v$. Therefore, using lemma~\ref{lem:alpha_hat_identity} and \eqref{eq:hat_alpha_lower_bound_S_v},
\[\hat{\alpha}(A,B)=\frac{1}{|A||B|}\sum_{k \in A, l \in B} \hat{\alpha}(k,l) \geq H - M.\]
Again using the established fact that no element of $P_{m^*}$ can contain both an element of $S_v$ and an element of $\bar S_v$, $m^*$ cannot be the final step of the algorithm, since that would require $P_{m^\star}=\{[n]\}$. Therefore $\hat{\alpha}(A,B)$ is one of the affinities which algorithm~\ref{alg:ip_hc}  would maximise over at step $m^*+1$, so the height of the vertex formed by a merge at step $m^*+1$ would be greater than or equal to $  H - M$, which contradicts the definition of $m^*$. Thus we have proved there exists an element of $P_{m^*}$ which contains both $i$ and $j$. This element must be the most recent common ancestor of $\{i\}$ and $\{j\}$, or an ancestor thereof. Also, this element must have been formed by a merge at a step less than or equal to $m^\star$ and so must have height greater than or equal to $H-M$. Invoking proposition \ref{prop:validity} we have thus established $\hat H \geq H-M$. In summary, in the case $S_v \neq [n]$, we have shown
\begin{equation}
m(z_i,z_j) - \hat{m}(i,j)\leq M.\label{eq:upper_bound_2}
\end{equation}

Combining the lower bound $\eqref{eq:m_lowerbound}$ with the upper bounds \eqref{eq:upper_bound_1}, \eqref{eq:upper_bound_2} and the fact that $i,j$ were chosen arbitrarily, completes the proof.

\end{proof}

\subsection{Supporting material and proof for Theorem \ref{thm:consistency}}\label{sec:proof_of_consistency}

\subsubsection*{Definitions and interpretation for assumptions \textbf{A\ref{ass:mixing}} and \textbf{A\ref{ass:disturbance}}}\label{sec:disc_of_assumptions}

We recall the definition of $\varphi$-mixing from,  e.g., \citep{peligrad1985invariance}. For a sequence of random variables $\{\xi_j;j\geq 1\}$, define:
$$
\varphi(k)\coloneqq \sup_{j\geq1} \sup_{A\in\mathcal{F}_1^j,B\in\mathcal{F}_{j+k}^\infty,\prob(A)>0} \left|\prob(B|A)-\prob(B)\right|. 
$$
where $\mathcal{F}_i^j$ is the $\sigma$-algebra
 generated by $\xi_i,\ldots,\xi_j$. Then $\{\xi_j;j\geq 1\}$ is said to be $\varphi$-mixing
 if $\varphi(k)\searrow 0$ as $k\to\infty$.

To interpret assumption \textbf{A\ref{ass:disturbance}} notice 
$$\mathbb{E}[\|\bS(Z_i)\bE_i\|^2|Z_1,\ldots,Z_n]\leq \|\bS(Z_i)\|^2_{\mathrm{op}}\mathbb{E}[\|\bE_i\|^2]\leq \max_{v\in\cZ} \|\bS(v)\|^2_{\mathrm{op}}\,p\,\mathbb{E}[|\bE_{11}|^2],$$
where the first inequality uses the independence of $\bE_i$ and $Z_i$ and the second inequality uses the fact that the elements of the vectors $\bE_i$ are i.i.d. Since $\bY_i-\bX(Z_i)=\bS(Z_i)\bE_i$,  \textbf{A\ref{ass:disturbance}} thus implies $\mathbb{E}[\|\bY_i-\bX(Z_i)\|^2]\in O(p)$ as $p\to\infty$, which can be viewed as a natural growth rate since $p$ is the dimension of the disturbance vector $\bY_i-\bX(Z_i)$. In the proof of proposition \ref{prop:consistency_data} below, \textbf{A\ref{ass:disturbance}} is used in a similar manner to control dot products of the form $\langle\bY_i-\bX_i,\bX_j\rangle$ and $\langle\bY_i-\bX_i,\bY_j- \bX_j\rangle$.

\begin{proof}[Proof of Theorem \ref{thm:consistency}]
For the first claim of the theorem, proposition \ref{prop:consistency_data} combined with the tower property of conditional expectation imply that for any $\delta>0$, 
\begin{multline}
  \mathbb{P}\left(\max_{1\leq i<j\leq n}\left|p^{-1}\left\langle \mathbf{Y}_{i},\mathbf{Y}_{j}\right\rangle -\alpha(Z_{i},Z_{j})\right|\geq\delta\right)\\
  \qquad\leq\frac{1}{\delta^{q}}\frac{1}{p^{q/2}}\frac{n(n-1)}{2}C(q,\varphi)M(q,\mathbf{X},\mathbf{E},\mathbf{S}),
\end{multline}
from which \eqref{eq:consistency_claim_1} follows.

The second claim of the theorem is in essence a corollary to \citep{whiteley2022discovering}[Thm 1]. A little work is needed to map the setting of the present work on to the setting of \citet{whiteley2022discovering}[Thm 1].  To see the connection, we endow the finite set $\cZ$ in the present work with the discrete metric: $d_{\cZ}(u,v)\coloneqq 0$ for $u\neq v$, and  $d_{\cZ}(v,v)=0$. Then $(\cZ,d_{\cZ})$ is a compact metric space, and in the setting specified in the statement of theorem \ref{thm:consistency} where \textbf{A\ref{ass:mixing}} is strengthened to independence, $s=p$ and $\bS(v)=\sigma \mathbf I_{p}$ for all $v\in\cZ$, the variables $\bY_1\ldots,\bY_n$; $\{\bX(v),v\in\cZ\}$; $\bE_1,\ldots,\bE_n$ exactly follow the Latent Metric Model of \citet{whiteley2022discovering}.

Moreover, according to the description in section \ref{sec:model}, the variables $Z_1,\ldots,Z_n$ are i.i.d. according to a probability distribution supported on $\cZ$. As in \citep{whiteley2022discovering}, by Mercer's theorem there exists a feature map $\phi:\cZ\to\R^r$ associated with this probability distribution, such that  $\langle\phi(u),\phi(v)\rangle=\alpha(u,v)$, for $u,v\in\cZ$. Here $r$, as in \textbf{A\ref{ass:r}}, is the rank of the matrix with elements $\alpha(u,v)$, which is at most $\cZ$.

Theorem 1 of  \citep{whiteley2022discovering}  in this context implies there exists a random orthogonal matrix $\mathbf{Q}\in\R^{r\times r}$ such that
\begin{equation}
\max_{i\in[n]}\left\|p^{-1/2}\mathbf{Q}\zeta_i-\phi(Z_i)\right\| \in O_\prob \left(\sqrt{\frac{nr}{p}} + \sqrt{\frac{r}{n}}\right)\label{eq:LMM_consistency}.
\end{equation}
Consider the bound:
\begin{align*}
\left|\alphapca(i,j)- \alpha(Z_i,Z_j)\right| & = 
\left|\frac{1}{p}\langle\zeta_i,\zeta_j\rangle - \langle\phi(Z_i),\phi(Z_j)\rangle\right|\\
& \leq  \left|\left\langle p^{-1/2}\mathbf{Q}\zeta_i - \phi(Z_i),p^{-1/2}\mathbf{Q}\zeta_j\right\rangle \right|\\
&\quad + \left|\left \langle\phi(Z_i),p^{-1/2}\mathbf{Q}\zeta_j-\phi(Z_j)\right\rangle\right|\\
&\leq \left\|p^{-1/2}\mathbf{Q}\zeta_i-\phi(Z_i)\right\|\left(\left\|p^{-1/2}\mathbf{Q}\zeta_j-\phi(Z_j)\right\|+\left\|\phi(Z_j)\right\|\right)\\
&\quad+ \|\phi(Z_i)\|\left\|p^{-1/2}\mathbf{Q}\zeta_i-\phi(Z_i)\right\|,
\end{align*}
where orthogonality of $\mathbf{Q}$ has been used, and the final inequality uses Cauchy-Schwarz and the triangle inequality for the $\|\cdot\|$ norm. Combining the above estimate with \eqref{eq:LMM_consistency}, the bound: 
\begin{multline}\max_{i\in[n]}\|\phi(Z_i)\|^2\leq \max_{v\in\cZ}\|\phi(v)\|^2=\max_{v\in\cZ}\alpha(v,v)  \\ \leq\sup_{j\geq 1}\max_{v\in\cZ} \E[|X_j(v)|^2] \leq  \sup_{j\geq 1}\max_{v\in\cZ} \E[|X_j(v)|^{2q}]^{1/q} 
\end{multline}
and \textbf{A\ref{ass:moments}} completes the proof of the second claim of the theorem.

\end{proof}

\begin{prop}\label{prop:consistency_data} Assume the model in section \ref{sec:model} satisfies assumptions \textbf{A\ref{ass:mixing}}-\textbf{A\ref{ass:disturbance}}, and let $\varphi$ and $q$ be as in \textbf{A\ref{ass:mixing}} and \textbf{A\ref{ass:moments}}. 
Then there exists a constant $C(q,\varphi)$ depending only on $q$ and $\varphi$ such
that for any $\delta>0$, 
\begin{multline}
  \mathbb{P}\left(\left.\max_{1\leq i<j\leq n}\left|p^{-1}\left\langle \mathbf{Y}_{i},\mathbf{Y}_{j}\right\rangle -\alpha(Z_{i},Z_{j})\right|\geq\delta\right|Z_1,\ldots,Z_n\right)\\
  \qquad\leq\frac{1}{\delta^{q}}\frac{1}{p^{q/2}}\frac{n(n-1)}{2}C(q,\varphi)M(q,\mathbf{X},\mathbf{E},\mathbf{S})
\end{multline}
where 
\begin{align*}
M(q,\mathbf{X},\mathbf{E},\mathbf{S}) & \coloneqq\sup_{k\geq1}\max_{v\in\mathcal{Z}}\mathbb{E}\left[|X_{k}(v)|^{2q}\right]\\
 & \quad+\mathbb{E}\left[\left|\mathbf{E}_{11}\right|^{q}\right]\left(\sup_{p\geq1}\max_{v\in\mathcal{Z}}\left\Vert \mathbf{S}(v)\right\Vert _{\mathrm{op}}^{q}\right)\sup_{k\geq1}\max_{v\in\mathcal{Z}}\mathbb{E}\left[\left.|X_{k}(v)|^{q}\right|\right]\\
 & \quad+\mathbb{E}\left[\left|\mathbf{E}_{11}\right|^{2q}\right]\sup_{p\geq1}\max_{v\in\mathcal{Z}}\left\Vert \mathbf{S}(v)\right\Vert _{\mathrm{op}}^{2q}.
\end{align*}
\end{prop}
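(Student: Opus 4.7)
The plan is to decompose the deviation into four conditionally mean-zero pieces, bound the $q$-th conditional moment of each at rate $p^{-q/2}$, and conclude via conditional Markov and a union bound over pairs. Writing $\bY_i=\bX(Z_i)+\bS(Z_i)\bE_i$ and using $p\,\alpha(Z_i,Z_j)=\E[\langle\bX(Z_i),\bX(Z_j)\rangle\mid Z_i,Z_j]$, I split
\[
p^{-1}\langle\bY_i,\bY_j\rangle-\alpha(Z_i,Z_j)=T_1+T_2+T_3+T_4,
\]
where, abbreviating $Z\coloneqq(Z_1,\ldots,Z_n)$, $T_1\coloneqq p^{-1}(\langle\bX(Z_i),\bX(Z_j)\rangle-\E[\langle\bX(Z_i),\bX(Z_j)\rangle\mid Z])$, $T_2\coloneqq p^{-1}\langle\bX(Z_i),\bS(Z_j)\bE_j\rangle$, $T_3\coloneqq p^{-1}\langle\bS(Z_i)\bE_i,\bX(Z_j)\rangle$, and $T_4\coloneqq p^{-1}\langle\bS(Z_i)\bE_i,\bS(Z_j)\bE_j\rangle$. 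Independence of $\bE_i$ from $\bX$ and from $\bE_j$ for $j\neq i$, together with $\E[\bE_i]=0$, makes each $T_k$ conditionally mean zero given $Z$.

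The chief technical obstacle is $T_1$. Conditioning on realizations $Z_i=z_i,Z_j=z_j$ and setting $\xi_k\coloneqq X_k(z_i)X_k(z_j)-\E[X_k(z_i)X_k(z_j)]$ gives $pT_1=\sum_{k=1}^p\xi_k$. Since $\xi_k$ is a measurable function of $(X_k(z_i),X_k(z_j))$, \textbf{A\ref{ass:mixing}} transfers the $\varphi$-mixing property to $\{\xi_k\}_{k\geq 1}$ with the same coefficients, and Cauchy--Schwarz combined with \textbf{A\ref{ass:moments}} gives $\E[|\xi_k|^q]\leq 2^q\sup_{k,v}\E[|X_k(v)|^{2q}]$. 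I would then invoke a Rosenthal-type moment inequality for $\varphi$-mixing sums under $\sum_k\varphi^{1/2}(k)<\infty$ (for example Shao's 1988 inequality, or Peligrad's) to obtain $\E[|\sum_k\xi_k|^q]\leq K(q,\varphi)\,p^{q/2}\sup_k\E[|\xi_k|^q]$, hence $\E[|T_1|^q\mid Z]\leq K'(q,\varphi)\,p^{-q/2}\sup_{k,v}\E[|X_k(v)|^{2q}]$. Locating and applying an appropriate weak-dependence Rosenthal bound is the main hurdle; everything that follows is classical.

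For $T_2$, I would condition further on $\bX(z_i)$ and set $\mathbf{v}\coloneqq\bS(z_j)^\top\bX(z_i)$, so that $pT_2=\sum_{k=1}^p v_k E_{jk}$ is a sum of independent mean-zero random variables. The classical Rosenthal inequality, together with the elementary bound $\sum_k|v_k|^q\leq(\sum_k v_k^2)^{q/2}$ valid for $q\geq2$, yields $\E[|pT_2|^q\mid Z,\bX(z_i)]\leq C_q(1+\E[|\bE_{11}|^q])\|\mathbf{v}\|^q$. Bounding $\|\mathbf{v}\|\leq\|\bS(z_j)\|_{\mathrm{op}}\|\bX(z_i)\|$ and using convexity $(\sum_k X_k(z_i)^2)^{q/2}\leq p^{q/2-1}\sum_k|X_k(z_i)|^q$ to get $\E[\|\bX(z_i)\|^q]\leq p^{q/2}\sup_{k,v}\E[|X_k(v)|^q]$ delivers $\E[|T_2|^q\mid Z]\in O(p^{-q/2})$ with explicit dependence on $\|\bS(z_j)\|_{\mathrm{op}}^q$ and the moment factors. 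Term $T_3$ is symmetric. For $T_4$, set $\mathbf{M}\coloneqq\bS(z_i)^\top\bS(z_j)$ and condition on $\bE_j$: then $pT_4=\sum_k(\mathbf{M}\bE_j)_k E_{ik}$ is again a Rosenthal sum, and bounding $\|\mathbf{M}\bE_j\|\leq\|\mathbf{M}\|_{\mathrm{op}}\|\bE_j\|$, using the same convexity estimate $\E[\|\bE_j\|^q]\leq p^{q/2}\E[|\bE_{11}|^q]$, and $\|\mathbf{M}\|_{\mathrm{op}}\leq\max_v\|\bS(v)\|_{\mathrm{op}}^2$ yields the desired $p^{-q/2}$ rate.

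Finally, convexity of $x\mapsto|x|^q$ gives $\E[|T_1+T_2+T_3+T_4|^q\mid Z]\leq 4^{q-1}\sum_{k=1}^4\E[|T_k|^q\mid Z]\leq C(q,\varphi)\,p^{-q/2}\,M(q,\bX,\bE,\bS)$, with the three lines in the definition of $M(q,\bX,\bE,\bS)$ corresponding respectively to the bound for $T_1$, the cross-type bound shared by $T_2,T_3$, and the bound for $T_4$. The conditional Markov inequality applied to a single pair $(i,j)$, followed by a union bound over the $n(n-1)/2$ unordered pairs, then delivers the claimed tail bound.
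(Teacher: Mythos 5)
Your proposal is correct and follows essentially the same route as the paper: the identical four-term decomposition, a Rosenthal-type moment inequality for $\varphi$-mixing sequences to handle the $\langle\bX(Z_i),\bX(Z_j)\rangle$ term (the paper cites Lemma 1.7 of Wang et al.\ for this), conditioning plus a classical moment inequality for independent mean-zero sums for the noise cross terms (the paper uses Marcinkiewicz--Zygmund with Minkowski where you use Rosenthal, which is an immaterial difference here), and conditional Markov with a union bound over the $n(n-1)/2$ pairs. No gaps.
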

\begin{proof}
Fix any $i,j$ such that $1\leq i<j\leq n$. Consider the decomposition:
\[
p^{-1}\left\langle \mathbf{Y}_{i},\mathbf{Y}_{j}\right\rangle -\alpha(Z_{i},Z_{j})=\sum_{k=1}^{4}\Delta_{k}
\]
where
\begin{align*}
 & \Delta_{1}\coloneqq p^{-1}\left\langle \mathbf{X}(Z_{i}),\mathbf{X}(Z_{j})\right\rangle -\alpha(Z_{i},Z_{j})\\
 & \Delta_{2}\coloneqq p^{-1}\left\langle \mathbf{X}(Z_{i}),\mathbf{S}(Z_{j})\mathbf{E}_{j}\right\rangle \\
 & \Delta_{3}\coloneqq p^{-1}\left\langle \mathbf{X}(Z_{j}),\mathbf{S}(Z_{i})\mathbf{E}_{i}\right\rangle \\
 & \Delta_{4}\coloneqq p^{-1}\left\langle \mathbf{S}(Z_{i})\mathbf{E}_{i},\mathbf{S}(Z_{j})\mathbf{E}_{j}\right\rangle 
\end{align*}
The proof proceeds by bounding $\mathbb{E}[|\Delta_{k}|^{q}|Z_{1},\ldots,Z_{n}]$
for $k=1,\ldots,4$. Writing $\Delta_{1}$ as 
\[
\Delta_{1}=\frac{1}{p}\sum_{k=1}^{p}\Delta_{1,k},\qquad\Delta_{1,k}\coloneqq X_{k}(Z_{i})X_{k}(Z_{j})-\mathbb{E}\left[\left.X_{k}(Z_{i})X_{k}(Z_{j})\right|Z_{1},\ldots,Z_{n}\right].
\]
 we see that $\Delta_{1}$ is a sum $p$ random variables
each of which is conditionally mean zero given $Z_{1},\ldots,Z_{n}$.
Noting that the two collections of random variables $\{Z_{1},\ldots,Z_{n}\}$
and $\{\mathbf{X}(v);v\in\mathcal{V}\}$ are independent (as per the description of the model in section \ref{sec:model}), under assumption \textbf{A\ref{ass:mixing}} we may apply a moment inequality for $\varphi$-mixing
random variables \citep{xuejun2010complete}[Lemma 1.7] to show that
there exists a constant $C_{1}(q,\varphi)$ depending only on $q,\varphi$
such that 
\begin{align}
&\mathbb{E}\left[\left.|\Delta_{1}|^{q}\right|Z_{1},\ldots,Z_{n}\right]\nonumber \\
& \leq C_{1}(q,\varphi)\left\{ \frac{1}{p^{q}}\sum_{k=1}^{p}\mathbb{E}\left[\left.|\Delta_{1,k}|^{q}\right|Z_{1},\ldots,Z_{n}\right]+\left(\frac{1}{p^{2}}\sum_{k=1}^{p}\mathbb{E}\left[\left.|\Delta_{1,k}|^{2}\right|Z_{1},\ldots,Z_{n}\right]\right)^{q/2}\right\} \nonumber \\
 & \leq C_{1}(q,\varphi)\left\{ \frac{1}{p^{q}}\sum_{k=1}^{p}\mathbb{E}\left[\left.|\Delta_{1,k}|^{q}\right|Z_{1},\ldots,Z_{n}\right]+\frac{1}{p^{q/2}}\frac{1}{p}\sum_{k=1}^{p}\mathbb{E}\left[\left.|\Delta_{1,k}|^{q}\right|Z_{1},\ldots,Z_{n}\right]\right\} \nonumber \\
 & \leq2C_{1}(q,\varphi)\frac{1}{p^{q/2}}\sup_{k\geq1}\mathbb{E}\left[\left.|\Delta_{1,k}|^{q}\right|Z_{1},\ldots,Z_{n}\right]\nonumber \\
 & \leq2^{q+1}C_{1}(q,\varphi)\frac{1}{p^{q/2}}\sup_{k\geq1}\max_{v\in\mathcal{Z}}\mathbb{E}\left[|X_{k}(v)|^{2q}\right],\label{eq:Delta_1_final_bound}
\end{align}
where second inequality holds by two applications of Jensen's inequality
and $q\geq2$, and the final inequality uses the fact that for $a,b\geq0$,
$(a+b)^{q}\leq2^{q-1}(a^{q}+b^{q})$, the Cauchy-Schwartz inequality,
and the independence of $\{Z_{1},\ldots,Z_{n}\}$ and $\{\mathbf{X}(v);v\in\mathcal{V}\}$.

For $\Delta_{2}$, we have 
\[
\Delta_{2}\coloneqq\frac{1}{p}\sum_{k=1}^{p}\Delta_{2,k},\qquad\Delta_{2,k}\coloneqq[\mathbf{S}(Z_{j})^{\top}\mathbf{X}(Z_{i})]_{k}\mathbf{E}_{jk},
\]
where $[\cdot]_{k}$ denotes the $k$th element of a vector. Since
the three collections of random variables, $\{Z_{1},\ldots,Z_{n}\}$,
$\{\mathbf{X}(v);v\in\mathcal{Z}\}$ and $\{\mathbf{E}_{1},\ldots,\mathbf{E}_{n}\}$
are mutually independent, and the elements of each vector $\mathbf{E}_{j}\in\mathbb{R}^{p}$
are mean zero and independent, we see that given $\{Z_{1},\ldots,Z_{n}\}$
and $\{\mathbf{X}(v);v\in\mathcal{V}\}$, $\Delta_{2}$ is a simple
average of conditionally independent and conditionally mean-zero random
variables. Applying the Marcinkiewicz--Zygmund inequality we find
there exists a constant $C_{2}(q)$ depending only on $q$ such that
\begin{multline}
\mathbb{E}\left[\left.|\Delta_{2}|^{q}\right|Z_{1},\ldots,Z_{n},\mathbf{X}(v);v\in\mathcal{Z}\right] \\ \leq C_{2}(q)\mathbb{E}\left[\left.\left|\frac{1}{p^{2}}\sum_{k=1}^{p}|\Delta_{2,k}|^{2}\right|^{q/2}\right|Z_{1},\ldots,Z_{n},\mathbf{X}(v);v\in\mathcal{Z}\right].\label{eq:Delta_2_bound}
\end{multline}
Noting that $q\geq2$ and applying Minkowski's inequality to the r.h.s.
of \eqref{eq:Delta_2_bound}, then using the independence of $\{Z_{1},\ldots,Z_{n}\}$,
$\{\mathbf{X}(v);v\in\mathcal{Z}\}$ and $\{\mathbf{E}_{1},\ldots,\mathbf{E}_{n}\}$
and the i.i.d. nature of the elements of the vector $\mathbf{E}_{j}$,
\begin{align}
&\mathbb{E}\left[\left.\left|\frac{1}{p^{2}}\sum_{k=1}^{p}|\Delta_{2,k}|^{2}\right|^{q/2}\right|Z_{1},\ldots,Z_{n},\mathbf{X}(v);v\in\mathcal{Z}\right]^{2/q} \label{eq:Delta_2_bound_start}\\
& \leq\frac{1}{p^{2}}\sum_{k=1}^{p}\mathbb{E}\left[\left.\left|\Delta_{2,k}\right|^{q}\right|Z_{1},\ldots,Z_{n},\mathbf{X}(v);v\in\mathcal{Z}\right]^{2/q}\\
 & =\frac{1}{p^{2}}\sum_{k=1}^{p}\mathbb{E}\left[\left.\left|[\mathbf{S}(Z_{j})^{\top}\mathbf{X}(Z_{i})]_{k}\right|^{q}\left|\mathbf{E}_{jk}\right|^{q}\right|Z_{1},\ldots,Z_{n},\mathbf{X}(v);v\in\mathcal{Z}\right]^{2/q}\\
 & =\frac{1}{p^{2}}\mathbb{E}\left[\left|\mathbf{E}_{11}\right|^{q}\right]^{2/q}\sum_{k=1}^{p}\left|[\mathbf{S}(Z_{j})^{\top}\mathbf{X}(Z_{i})]_{k}\right|^{2}\\
 & =\frac{1}{p^{2}}\mathbb{E}\left[\left|\mathbf{E}_{11}\right|^{q}\right]^{2/q}\left\Vert \mathbf{S}(Z_{j})^{\top}\mathbf{X}(Z_{i})\right\Vert ^{2}\\
 & \leq\frac{1}{p^{2}}\mathbb{E}\left[\left|\mathbf{E}_{11}\right|^{q}\right]^{2/q}\max_{v\in\mathcal{Z}}\left\Vert \mathbf{S}(v)\right\Vert _{\mathrm{op}}^{2}\left\Vert \mathbf{X}(Z_{i})\right\Vert ^{2}.\label{eq:Delta_2_bound_end}
\end{align}
Substituting into (\ref{eq:Delta_2_bound}) and using the tower property of conditional expectation and Jensen's inequality we obtain:
\begin{align}
&\mathbb{E}\left[\left.|\Delta_{2}|^{q}\right|Z_{1},\ldots,Z_{n}\right] \nonumber\\
& \leq\frac{1}{p^{q/2}}\mathbb{E}\left[\left|\mathbf{E}_{11}\right|^{q}\right]\max_{v\in\mathcal{Z}}\left\Vert \mathbf{S}(v)\right\Vert _{\mathrm{op}}^{q}\frac{1}{p^{q/2}}\mathbb{E}\left[\left.\left\Vert \mathbf{X}(Z_{i})\right\Vert ^{q}\right|Z_{1},\ldots,Z_{n}\right]\nonumber \\
 & =\frac{1}{p^{q/2}}\mathbb{E}\left[\left|\mathbf{E}_{11}\right|^{q}\right]\max_{v\in\mathcal{Z}}\left\Vert \mathbf{S}(v)\right\Vert _{\mathrm{op}}^{q}\mathbb{E}\left[\left.\left(\frac{1}{p}\sum_{k=1}^{p}|X_{k}(Z_{j})|^{2}\right)^{q/2}\right|Z_{1},\ldots,Z_{n}\right]\nonumber \\
 & \leq\frac{1}{p^{q/2}}\mathbb{E}\left[\left|\mathbf{E}_{11}\right|^{q}\right]\max_{v\in\mathcal{Z}}\left\Vert \mathbf{S}(v)\right\Vert _{\mathrm{op}}^{q}\mathbb{E}\left[\left.\frac{1}{p}\sum_{k=1}^{p}|X_{k}(Z_{j})|^{q}\right|Z_{1},\ldots,Z_{n}\right]\nonumber \\
 & \leq\frac{1}{p^{q/2}}\mathbb{E}\left[\left|\mathbf{E}_{11}\right|^{q}\right]\max_{v\in\mathcal{Z}}\left\Vert \mathbf{S}(v)\right\Vert _{\mathrm{op}}^{q}\sup_{k\geq1}\max_{v\in\mathcal{Z}}\mathbb{E}\left[\left.|X_{k}(v)|^{q}\right|\right]\label{eq:Delta_2_final_bound}
\end{align}
where the second inequality holds by Jensen's inequality (recall $q\geq2$).
Since the r.h.s. of (\ref{eq:Delta_2_final_bound}) does not depend
on $i$ or $j$, the same bound holds with $\Delta_{2}$ on the l.h.s.
replaced by $\Delta_{3}$. 

Turning to $\Delta_{4}$, we have 
\[
\Delta_{4}\coloneqq\frac{1}{p}\left\langle \mathbf{S}(Z_{i})\mathbf{E}_{i},\mathbf{S}(Z_{j})\mathbf{E}_{j}\right\rangle =\frac{1}{p}\sum_{1\leq k\leq p}\Delta_{4,k},\qquad\Delta_{4,k}\coloneqq\mathbf{E}_{ik}[\mathbf{S}(Z_{i})^{\top}\mathbf{S}(Z_{j})\mathbf{E}_{j}]_{k}.
\]
Noting that $i\neq j$, and that the elements of $\mathbf{E}_{i}$
and $\mathbf{E}_{j}$ are independent, identically distributed, and
mean zero, we see that $\Delta_{4}$ is a sum of $p$ random
variables which are all conditionally mean zero and conditionally
independent given $Z_{1},\ldots,Z_{n}$ and $\mathbf{E}_j$. The Marcinkiewicz--Zygmund
inequality gives:
\begin{align}
\mathbb{E}\left[\left.|\Delta_{4}|^{q}\right|Z_{1},\ldots,Z_{n},\mathbf{E}_j\right] & \leq C_{2}(q)\mathbb{E}\left[\left.\left|\frac{1}{p^{2}}\sum_{1\leq k\leq p}|\Delta_{4,k}|^{2}\right|^{q/2}\right|Z_{1},\ldots,Z_{n},\mathbf{E}_j\right].\label{eq:Delta_4_bound}
\end{align}
Applying Minkowski's inequality to the r.h.s. of \eqref{eq:Delta_4_bound} and following very similar reasoning to \eqref{eq:Delta_2_bound_start}-\eqref{eq:Delta_2_bound_end},
\begin{align*}
&\mathbb{E}\left[\left.\left|\frac{1}{p^{2}}\sum_{1\leq k\leq p}|\Delta_{4,k}|^{2}\right|^{q/2}\right|Z_{1},\ldots,Z_{n},\mathbf{E}_j\right]^{2/q}  \leq \frac{1}{p^{2}}\mathbb{E}\left[\left|\mathbf{E}_{11}\right|^{q}\right]^{2/q}\max_{v\in\mathcal{Z}}\left\Vert \mathbf{S}(v)\right\Vert _{\mathrm{op}}^{4}\left\Vert \mathbf{E}_j\right\Vert ^{2}.
\end{align*}
Using the tower property of conditional expectation, independence and Jensen's inequality,
\begin{align}
&\mathbb{E}\left[\left.|\Delta_{4}|^{q}\right|Z_{1},\ldots,Z_{n}\right] \nonumber\\
& \leq\frac{1}{p^{q/2}}\mathbb{E}\left[\left|\mathbf{E}_{11}\right|^{q}\right]\max_{v\in\mathcal{Z}}\left\Vert \mathbf{S}(v)\right\Vert _{\mathrm{op}}^{2q} \mathbb{E}\left[\left(\frac{1}{p}\sum_{k=1}^p\left|\mathbf{E}_{jk}\right|^{2}\right)^{q/2}\right]\nonumber \\
 & \leq \frac{1}{p^{q/2}}\mathbb{E}\left[\left|\mathbf{E}_{11}\right|^{q}\right]^2\max_{v\in\mathcal{Z}}\left\Vert \mathbf{S}(v)\right\Vert _{\mathrm{op}}^{2q}.\label{eq:Delta_4_final_bound}
\end{align}
Combining (\ref{eq:Delta_1_final_bound}), (\ref{eq:Delta_2_final_bound})
and (\ref{eq:Delta_4_final_bound}) using the fact that for $a,b\geq0$,
$(a+b)^{q}\leq2^{q-1}(a^{q}+b^{q})$, we find that there exists a
constant $C(q,\varphi)$ depending only on $q$ and $\varphi$ such
that 
\begin{align*}
 & \mathbb{E}\left[\left.\left|p^{-1}\left\langle \mathbf{Y}_{i},\mathbf{Y}_{j}\right\rangle -\alpha(Z_{i},Z_{j})\right|^{q}\right|Z_{1},\ldots,Z_{n}\right]\\
 & \leq C(q,\varphi)\frac{1}{p^{q/2}}M(q,\mathbf{X},\mathbf{E},\mathbf{S}),
\end{align*}
where $M(q,\mathbf{X},\mathbf{E},\mathbf{S})$ is defined in the statement
of the proposition and is finite by assumptions \textbf{A\ref{ass:moments}} and \textbf{A\ref{ass:disturbance}}. By Markov's inequality, for any $\delta\geq0$, 
\begin{equation}\label{eq:deviation_bound}
\mathbb{P}\left(\left.\left|p^{-1}\left\langle \mathbf{Y}_{i},\mathbf{Y}_{j}\right\rangle -\alpha(Z_{i},Z_{j})\right|\geq\delta\right|Z_1,\ldots,Z_n\right)\leq\frac{1}{\delta^{q}}C(q,\varphi)\frac{1}{p^{q/2}}M(q,\mathbf{X},\mathbf{E},\mathbf{S})
\end{equation}
and the proof is completed by a union bound:
\begin{align*}
&\mathbb{P}\left(\left.\max_{1\leq i<j\leq n}\left|p^{-1}\left\langle \mathbf{Y}_{i},\mathbf{Y}_{j}\right\rangle -\alpha(Z_{i},Z_{j})\right|<\delta\right| Z_1,\ldots,Z_n\right) \nonumber\\
& =\mathbb{P}\left(\left.\bigcap_{1\leq i<j\leq n}\left|p^{-1}\left\langle \mathbf{Y}_{i},\mathbf{Y}_{j}\right\rangle -\alpha(Z_{i},Z_{j})\right|<\delta\right| Z_1,\ldots,Z_n\right)\\
 & =1-\mathbb{P}\left(\left.\bigcup_{1\leq i<j\leq n}\left|p^{-1}\left\langle \mathbf{Y}_{i},\mathbf{Y}_{j}\right\rangle -\alpha(Z_{i},Z_{j})\right|\geq\delta\right| Z_1,\ldots,Z_n\right)\\
 & \geq1-\frac{n(n-1)}{2}\frac{1}{\delta^{q}}C(q,\varphi)\frac{1}{p^{q/2}}M(q,\mathbf{X},\mathbf{E},\mathbf{S}).
\end{align*}    
\end{proof}

\subsection{Interpretation of merge heights and exact tree recovery}\label{sec:app_exact_recovery}
Here we expand on the discussion in section \ref{sec:putting_together} and provide further interpretation of merge heights and algorithm~\ref{alg:ip_hc}. In particular our aim is to clarify in what circumstances algorithm~\ref{alg:ip_hc} will asymptotically correctly recover underlying tree structure. For ease of exposition throughout section \ref{sec:app_exact_recovery} we assume that $\cZ$ are the leaf vertices of $\cT$.

As a preliminary we note the following corollary to theorem~\ref{thm:stability}: assuming $b>0$, if one takes as input to algorithm~\ref{alg:ip_hc} the true merge heights, i.e. (up to bijective relabelling of leaf vertices) $\hat{\alpha}(\cdot,\cdot)\coloneqq m(\cdot,\cdot)=\alpha(\cdot,\cdot)$, where $n=|\cZ|$, then  theorem~\ref{thm:stability} implies that algorithm~\ref{alg:ip_hc} outputs a dendrogram $\cD$ whose merge heights $\hat{m}(\cdot,\cdot)$ are equal to $m(\cdot,\cdot)$ (up to bijective relabeling over vertices). This clarifies that with knowledge of  $m(\cdot,\cdot)$, algorithm~\ref{alg:ip_hc}) constructs a dendrogram which has $m(\cdot,\cdot)$ as its merge heights.

We now ask for more: if once again $m(\cdot,\cdot)$ is taken as input to algorithm~\ref{alg:ip_hc}, under what conditions is the output tree $\hat\cT$ equal to $\cT$ (upto bijective relabelling of vertices)? We claim this holds when $\cT$ is a binary tree and that all its non-leaf nodes have different heights. We provide a sketch proof of this claim, since a complete proof involves many tedious and notationally cumbersome details.

To remove the need for repeated considerations of relabelling, suppose $\cT=(\cV,\cE)$ is given, then w.l.o.g. relabel the leaf vertices of $\cT$ as $\{1\},\ldots,\{|\cZ|\}$ and relabel each non-leaf vertex to be the union of its children. Thus each vertex is some subset of $[|\cZ|]$. 

Now assume that $\cT$ is a binary tree and that all its non-leaf nodes have different heights. Note that $|\cV|=2|\cZ|-1$, i.e., there are $|\cZ|-1$ non-leaf vertices. The tree $\cT$ is uniquely characterized by a sequence of partitions $\tilde{P}_0,\ldots,\tilde{P}_{|\cZ|-1}$ where $\tilde{P}_0\coloneqq\{\{1\},\ldots,\{|\cZ|\}\}$, and for $m=1,\ldots,|\cZ|-1$, $\tilde{P}_m$ is constructed from $\tilde{P}_{m-1}$ by merging the two elements of $\tilde{P}_{m-1}$ whose most recent common ancestor is the $m$th highest non-leaf vertex (which is uniquely defined since we are assuming no two non-leaf vertices have equal heights). 

To see that in this situation algorithm~\ref{alg:ip_hc}, with $\hat{\alpha}(\cdot,\cdot)\coloneqq m(\cdot,\cdot)$ and $n=|\cZ|$ as input, performs exact recovery of the tree, i.e., $\hat{\cT}=\cT$, it suffices to notice that the sequence of partitions $P_0,\ldots,P_{|\cZ|-1}$ constructed by algorithm~\ref{alg:ip_hc} uniquely characterizes $\hat{\cT}$, and moreover $(\tilde{P}_0,\ldots,\tilde{P}_{|\cZ|-1})  = (P_0,\ldots,P_{|\cZ|-1})$. The details of this last equality involve simple but tedious substitutions of $m(\cdot,\cdot)$ in place of $\hat{\alpha}(\cdot,\cdot)$ in algorithm~\ref{alg:ip_hc}, so are omitted.

\section{Further details of numerical experiments and data preparation}\label{sec:app_numerical}

All real datasets used are publicly available under the CC0: Public domain license. Further, all experiments were run locally on a laptop with an integrated GPU (Intel UHD Graphics 620).

\subsection{Simulated data}
For each $v\in\cV$,  $X_1(v),\ldots,X_p(v)$ are independent and identically distributed Gaussian random variables with: 
\begin{align*}
    X_j(1) &\sim N(X_j(6), 5), \\
    X_j(2) &\sim N(X_j(6), 2), \\ 
    X_j(3) &\sim N(X_j(6), 2), \\
    X_j(4) &\sim N(X_j(7), 0.5), \\
    X_j(5) &\sim N(X_j(7), 7), \\
    X_j(6) &\sim N(X_j(8), 2), \\
    X_j(7) &\sim N(X_j(8), 1), \\
    X_j(8) &\sim N(0, 1),
\end{align*}
for $j = 1,\dots, p$.

\subsection{20 Newsgroups}
The dataset originates from \citep{lang1995newsweeder}, however, the version used is the one available in the Python package `scikit-learn` \citep{pedregosa2011scikit}. Each document is pre-processed in the following way: generic stopwords and e-mail addresses are removed, and words are lemmatised. The processed documents are then converted into a matrix of TF-IDF features. Labels can be found on the 20 Newsgroups website \url{http://qwone.com/~jason/20Newsgroups/}, but are mainly intuitive from the title of labels, with full stops separating levels of hierarchy. When using PCA a dimension of $ r = 34$ was selected by the method described in appendix~\ref{sec:wasserstein}.

The following numerical results complement those in the main part of the paper.

\begin{table}[h!]
    \caption{\label{tbl:20NG_extra_results}Kendall $\tau_b$ ranking performance measure,  for Algorithm 1 and the 20 Newsgroups data set. The mean Kendall $\tau_b$ correlation coefficient is reported alongside the standard error (numerical value shown is the standard error$\times 10^{3}$). This numerical results are plotted in figure \ref{fig:varying_p} below. }
      \centering
  \begin{tabular}{c c|c c c c c}
    \toprule
   Data & Input & $p=500$ &$p=1000$  & $p=5000$ &  $p=7500$ & $p=12818$ \\
    \midrule
    \multirow{2}{*}{\makecell{Newsgroups}} 
    & $\bY_{1:n}$ & 0.026 (0.55)  & 0.016 (1.0) & 0.13 (2.2) & 0.17 (2.5) & 0.26 (2.9)\\
    & $\zeta_{1:n}$ & 0.017 (0.72)  & 0.047 (1.2) & 0.12 (1.9) & 0.15 (2.5) &  0.24 (2.6)\\
    \bottomrule
%     \bottomrule
  \end{tabular}
\end{table}

\begin{figure}[h!]
\centering
\includegraphics[width=0.6\textwidth]{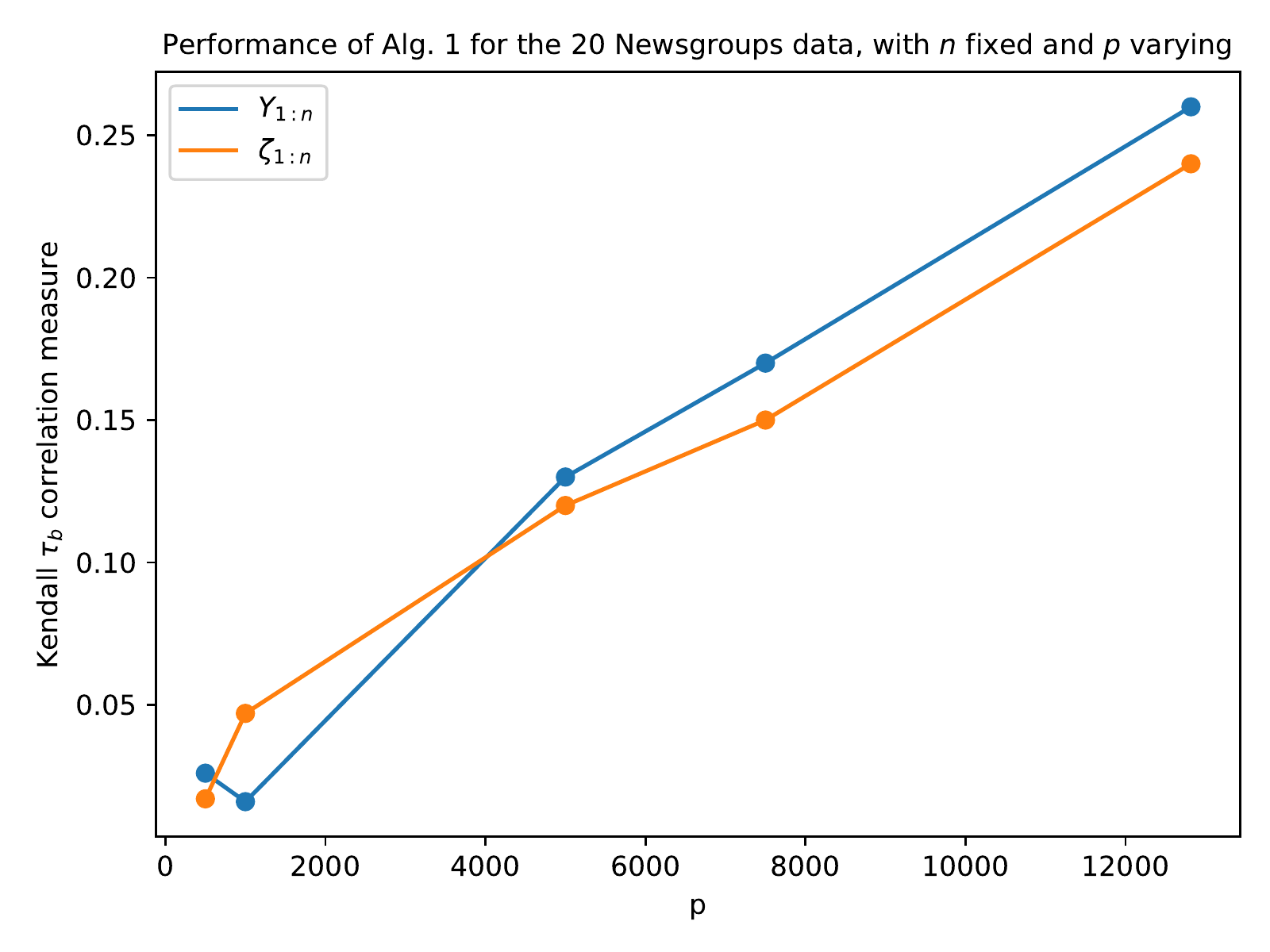}
\caption{\label{fig:varying_p} Performance of Algorithm 1 for the 20 Newsgroups data set as a function of number of TF-IDF features, $p$, with $n$ fixed. See table \ref{tbl:results} for numerical values and standard errors. }
\end{figure}

\subsection{Zebrafish gene counts}
These data were collected over a 24-hour period (timestamps available in the data), however, the temporal aspect of the data was ignored when selecting a sub-sample. To process the data, we followed the steps in \citep{wagner2018single} which are the standard steps in the popular SCANPY \citep{wolf2018scanpy} -- a Python package for analysing single-cell gene-expression data to process the data. This involves filtering out genes that are present in less than 3 cells or are highly variable, taking the logarithm, scaling and regressing out the effects of the total counts per cell. When using PCA a dimension of $ r = 29$ was selected by the method described in appendix~\ref{sec:wasserstein}.

\subsection{Amazon reviews}
The pre-processing of this dataset is similar to that of the Newsgroup data except e-mail addresses are no longer removed. Some data points had a label of `unknown' in the third level of the hierarchy, these were removed from the dataset. In addition, reviews that are two words or less are not included. When using PCA a dimension of $r = 22$ was selected by the method described in appendix~\ref{sec:wasserstein}.

\subsection{S\&P 500 stock data}
This dataset was found through the paper \citep{barigozzi2022fnets} with the authors code used to process the data. The labels follow the Global Industry Classification Standard and can be found here: \citep{data:stock_labels}. The return on day $i$ of each stock is calculated by
\begin{align*}
    \text{return}_i = \frac{p^{cl}_i - p^{op}_i}{p^{op}_i},
\end{align*}
where $p^{op}_i$ and $p^{cl}_i$ is the respective opening and closing price on day $i$. When using PCA a dimension of $ r = 10$ was used as selected by the method described in appendix~\ref{sec:wasserstein}.

\subsection{Additional method comparison}
Table \ref{tbl:add_results} reports additional method comparison results, complementing those in table \ref{tbl:results} which concerned only the average linkage function (noting UPGMA is equivalent to using the average linkage function with Euclidean distances). In table \ref{tbl:add_results} we also compare to Euclidean and cosine distances paired with complete and single linkage functions.  To aid comparison, the first column (average linkage with the dot product) is the same as in table \ref{tbl:results}. In general, using complete or single linkage performs worse for both Euclidean and cosine distances. The only notable exception being a slight improvement on the simulated dataset.

\begin{table}[!htb]
    \caption{\label{tbl:add_results}Kendall $\tau_b$ ranking performance measure. For the dot product method,  i.e., algorithm~\ref{alg:ip_hc}, $\bY_{1:n}$ as input corresponds to using $\hat{\alpha}_{\text{data}}$, and $\zeta_{1:n}$ corresponds to $\hat{\alpha}_{\text{pca}}$.  The mean Kendall $\tau_b$ correlation coefficient is reported alongside the standard error (numerical value shown is the standard error $\times 10^3$).}
      \centering
      \scalebox{0.9}{
  \begin{tabular}{c c|c  cc cc}
    \toprule
   \multirow{2}{*}{Data} & \multirow{2}{*}{Input} & \multicolumn{1}{c}{Average linkage} & \multicolumn{2}{c}{Complete linkage} & \multicolumn{2}{c}{Single linkage}\\
   & & Dot product & Euclidean & Cosine  & Euclidean & Cosine  \\
    \midrule

    \multirow{2}{*}{\makecell{Newsgroups}} 
    & $\bY_{1:n}$ & 0.26 (2.9) & 0.022 (0.87) & -0.010 (0.88) & -0.0025 (0.62) & -0.0025 (0.62)  \\
    & $\zeta_{1:n}$ & 0.24 (2.6) & 0.0041 (1.2) & 0.036 (1.2) & -0.016 (2.0) & 0.067 (1.5)   \\
\midrule
    \multirow{2}{*}{\makecell{Zebrafish}}
    & $\bY_{1:n}$ & 0.34 (3.4) & 0.15 (2.2) & 0.24  (3.2)  & 0.023  (3.0)  & 0.032 (2.9)   \\
    & $\zeta_{1:n}$ & 0.34 (3.4) &  0.17 (2.0)  & 0.30 (3.4)  & 0.12 (2.8)  &   0.15 (3.2)    \\
\midrule
    \multirow{2}{*}{\makecell{Reviews}} 
    & $\bY_{1:n}$ & 0.15 (2.5) & 0.019 (0.90) & 0.023 (1.0) & 0.0013 (0.81) & 0.0013 (0.81) \\
    & $\zeta_{1:n}$ & 0.14 (2.4) & 0.058 (1.5) & 0.063 (1.8) & 0.015 (1.2) & 0.038 (1.0)  \\
\midrule
    \multirow{2}{*}{\makecell{S\&P 500}} 
    & $\bY_{1:n}$ & 0.34 (10) & 0.33  (10) & 0.33  (10)  & 0.17 (10)  & 0.17 (10)   \\
    & $\zeta_{1:n}$ & 0.36 (9.4) & 0.32 (10) & 0.31  (10)  & 0.36 (13)  & 0.39 (12)  \\
    \midrule
    \multirow{2}{*}{\makecell{Simulated}} 
    & $\bY_{1:n}$ & 0.86 (1) & 0.55 (8.7) & 0.84 (2.0)  & 0.55 (8.7)  &  0.84 (2.0)    \\
    & $\zeta_{1:n}$ & 0.86 (1) & 0.55 (8.7) & 0.84 (2.0)  & 0.55 (8.7)  &  0.84 (2.0)   \\
    \bottomrule
%     \bottomrule
  \end{tabular}
  }
\end{table}

\begin{table}[h!]
  \caption{\label{tbl:UPGMA_extra_results}Kendall $\tau_b$ ranking performance measure. The mean Kendall $\tau_b$ correlation coefficient is reported alongside the standard error (numerical value shown is the standard error$\times 10^{3}$). }
      \centering
  \begin{tabular}{c c| c c}
    \toprule
   Data & Input & UPGMA with dot product *dissimilarity* & UPGMA with Manhattan distance\\
    \midrule

    \multirow{2}{*}{\makecell{Newsgroups}} 
    & $\bY_{1:n}$ & -0.0053 (0.24) & -0.0099 (1.3) \\
    & $\zeta_{1:n}$ & 0.0029 (0.33) & 0.052 (1.6)   \\
\midrule
    \multirow{2}{*}{\makecell{Zebrafish}} 
    & $\bY_{1:n}$ &  0.0012 (0.13) & 0.16 (2.4)   \\
    & $\zeta_{1:n}$ & 0.00046 (0.12)  & 0.050 (2.8)   \\
\midrule
    \multirow{2}{*}{\makecell{Reviews}} 
    & $\bY_{1:n}$ & -0.0005 (0.29) &  0.0018 (0.44)  \\
    & $\zeta_{1:n}$ & -0.0015 (0.41) & 0.061 (1.3)  \\
\midrule
    \multirow{2}{*}{\makecell{S\&P 500}} 
    & $\bY_{1:n}$ &  0.0026 (7.7) &  0.37 (9.4)    \\
    & $\zeta_{1:n}$ & 0.0028 (7.5) &  0.39 (11)  \\
    \midrule
    \multirow{2}{*}{\makecell{Simulated}} 
    & $\bY_{1:n}$ &  -0.0026 (1.6) &  0.55 (8.7)     \\
    & $\zeta_{1:n}$ & -0.0023 (1.8) &  0.84 (2) \\
    \bottomrule
%     \bottomrule
  \end{tabular}
\end{table}

\section{Understanding agglomerative clustering with Euclidean or cosine distances in our framework} \label{app:comparing_methods_theory}

\begin{figure}[t]
    \centering
    \includegraphics[width=.9\textwidth]{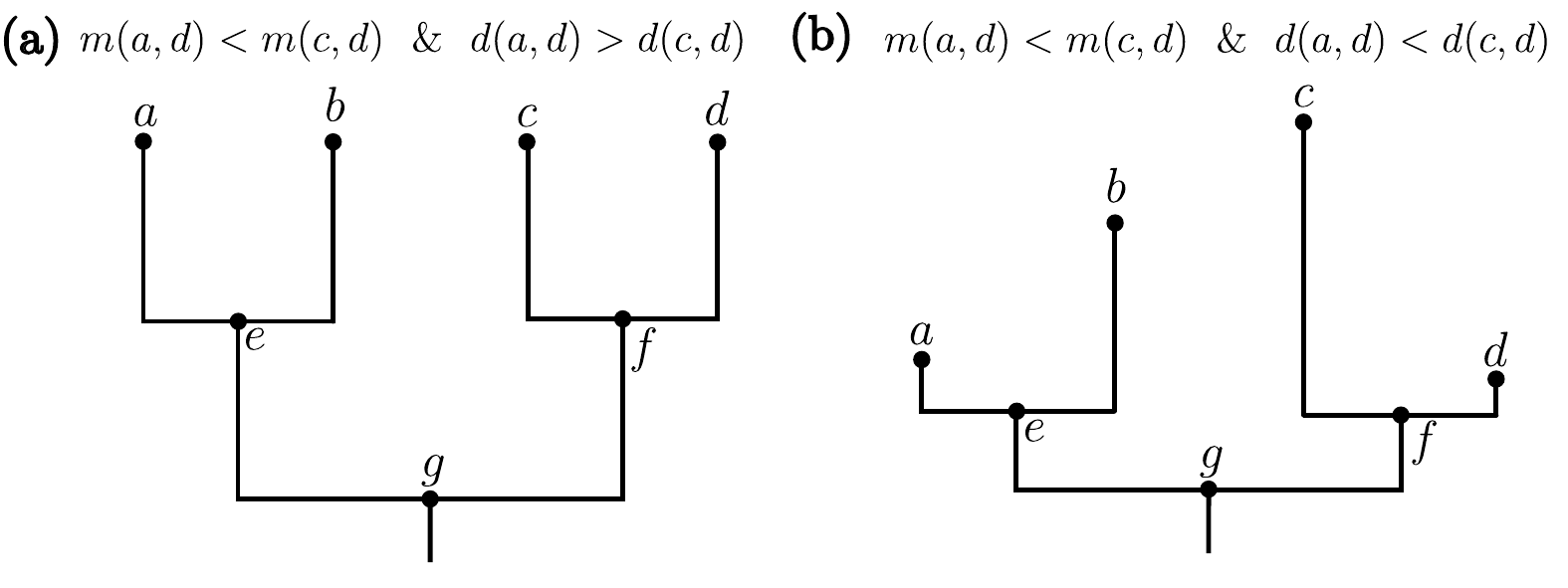}
    \caption{
    Illustration of how maximising merge height $m(\cdot,\cdot)$ may or may not be equivalent to minimising distance $d(\cdot,\cdot)$, depending on the geometry of the dendrogram. (a) Equivalence holds (b) Equivalence does not hold.}
    \label{fig:ed_vs_ip}
\end{figure}

\subsection{Quantifying dissimilarity using Euclidean distance}
The first step of many standard variants of agglomerative clustering such as UPGMA and Ward's method is to find and merge the pair of data vectors which are closest to each other in Euclidean distance. From the elementary identity:
$$
\|\bY_i-\bY_j\|^2=\|\bY_i\|^2+\|\bY_j\|^2-2\langle \bY_i,\bY_j\rangle,
$$
we see that, in general, this is not equivalent to finding the pair with largest dot product, because of the presence of the terms $\|\bY_i\|^2$ and $\|\bY_j\|^2$.   For some further insight in to how this relates to our modelling and theoretical analysis framework, it is revealing to the consider the idealised case of choosing to merge by maximising merge height $m(\cdot,\cdot)$ versus minimising $d(\cdot,\cdot)$ (recall the identities for $m$ and $d$ established in lemma~\ref{lem:merge_height}). Figure~\ref{fig:ed_vs_ip} shows two simple scenarios in which geometry of the dendrogram has an impact on whether or not maximising merge height $m(\cdot,\cdot)$ is equivalent to minimising $d(\cdot,\cdot)$. From this example we see that,  in situations where some branch lengths are disproportionately large, minimising $d(\cdot,\cdot)$ will have different results to maximising $m(\cdot,\cdot)$.

UPGMA and Ward's method differ in their linkage functions, and so differ in the clusters they create in practice after their respective first algorithmic steps. UPGMA uses average linkage to combine Euclidean distances, and there does not seem to be a mathematically simple connection between this and algorithm~\ref{alg:ip_hc}, except to say that in general it will return different results. Ward's method merges the pair of clusters which results in the minimum increase in within-cluster variance. When clusters contain  equal numbers of samples, this increase is equal to the squared Euclidean distance between the clusters' respective centroids.

\subsection{Agglomerative clustering with cosine distance is equivalent to an instance of algorithm~\ref{alg:ip_hc}}\label{sec:cos_dist_equiv}
The cosine `distance' between $\bY_i$ and $\bY_j$ is:
$$
d_{\mathrm{cos}}(i,j)\coloneqq 1- \frac{\langle \bY_i, \bY_j\rangle}{\|\bY_i\|\|\bY_j\|}.%=1-\frac{\alphadata(i,j)}{\frac{1}{p}\|\bY_i\|\|\bY_j\|}.
$$
In table~\ref{tbl:results} we show results for standard agglomerative clustering using $d_{\mathrm{cos}}$ as a measure of dissimilarity, combined with average linkage. At each iteration, this algorithm works by merging the pair of data-vectors/clusters which are closest with respect to $d_{\mathrm{cos}}(\cdot,\cdot)$, say $u$ and $v$ merged to form $w$, with dissimilarities between $w$ and the existing data-vectors/clusters computed according to the average linkage function:
\begin{equation}
d_{\mathrm{cos}}(w,\cdot)\coloneqq \frac{|u|}{|w|}d_{\mathrm{cos}}(u,\cdot) + \frac{|v|}{|w|}d_{\mathrm{cos}}(v,\cdot).\label{eq:cos_distance_linkage}
\end{equation}
This procedure can be seen to be equivalent to algorithm~\ref{alg:ip_hc} with input $\hat{\alpha}(\cdot,\cdot)=1-d_{\mathrm{cos}}(\cdot,\cdot)$. Indeed maximising $1-d_{\mathrm{cos}}(\cdot,\cdot)$ is clearly equivalent to minimizing $d_{\mathrm{cos}}(\cdot,\cdot)$, and with $\hat{\alpha}(\cdot,\cdot)=1-d_{\mathrm{cos}}(\cdot,\cdot)$ the affinity computation at line 6 of algorithm~\ref{alg:ip_hc} is:
\begin{align*}
\hat{\alpha}(w,\cdot) &\coloneqq \frac{|u|}{|w|}\hat{\alpha}(u,\cdot) + \frac{|v|}{|w|}\hat{\alpha}(v,\cdot)\\
&=\frac{|u|}{|w|}\left[1-d_{\mathrm{cos}}(u,\cdot)\right] + \frac{|v|}{|w|}\left[1-d_{\mathrm{cos}}(v,\cdot)\right]\\
&=\frac{|u|+|v|}{|w|} - \frac{|u|}{|w|}d_{\mathrm{cos}}(u,\cdot) - \frac{|u|}{|w|}d_{\mathrm{cos}}(v,\cdot)\\
&= 1 - \left[\frac{|u|}{|w|}d_{\mathrm{cos}}(u,\cdot) + \frac{|u|}{|w|}d_{\mathrm{cos}}(v,\cdot)\right],
\end{align*}
where on the r.h.s. of the final equality we recognise \eqref{eq:cos_distance_linkage}.

\subsection{Using cosine similarity as an affinity measure removes multiplicative noise}\label{sec:cos_dist_cons}

Building from the algorithmic equivalence identified in section~\ref{sec:cos_dist_equiv}, we now address the theoretical performance of agglomerative clustering with cosine distance in our modelling framework. 

Intuitively, cosine distance is used in situations where the magnitudes of data vectors are thought not to convey useful information about dissimilarity. To formalise this idea, we consider a variation of our statistical model from section~\ref{sec:model} in which: 
\begin{itemize}[leftmargin=.25in]
\item $\cZ$ are the leaf vertices of $\cT$, $|\cZ|=n$, and we take these vertices to be labelled $\cZ=\{1,\ldots,n\}$
\item $\bX$ is as in section \ref{sec:model}, properties \textbf{A\ref{ass:conditional_indep}} and \textbf{A\ref{ass:martingale}} hold, and it is assumed that for all $v\in\cZ$, $p^{-1}\mathbb{E}[\|\bX(v)\|^2]=1$.
\item the additive model \eqref{eq:LMM} is replaced by a multiplicative noise model:
\begin{equation}\label{eq:multiplicative_model}
\bY_i = \gamma_i \bX(i),
\end{equation}
where $\gamma_i > 0$ are all strictly positive random scalars, independent of other variables, but otherwise arbitrary.
\end{itemize}
The interpretation of this model is that the expected square magnitude of the data vector $\bY_i$ is entirely determined by $\gamma_i$, indeed we have
$$
\frac{1}{p}\mathbb{E}[\|\bY_i\|^2|\gamma_i] = \gamma_i^2 \frac{1}{p}\mathbb{E}[\|\bX(i)\|^2 ] = \gamma_i^2 h(i) = \gamma_i^2,
$$
where $h$ is as in section \ref{sec:model}. We note that in this multiplicative model, the random vectors $\bE_i$ and matrices $\bS(v)$ from section \ref{sec:model} play no role, and one can view the random variables $Z_1,\ldots,Z_n$ as being replaced by constants $Z_i=i$, rather than being i.i.d.

Now define:
$$
\alphacos(i,j)\coloneqq \frac{\langle \bY_i, \bY_j\rangle}{\|\bY_i\|\|\bY_j\|} = 1-d_{\mathrm{cos}}(i,j).
$$

\begin{thm}\label{thm:cosine}
Assume that the model specified in section \ref{sec:cos_dist_cons} satisfies \textbf{A\ref{ass:mixing}} and for some $q\geq 2$, $\sup_{j\geq 1}\max_{v\in\cZ}\mathbb{E}[|X_j(v)|^{2q}]<\infty$. Then
$$\max_{i,j\in[n],i\neq j}\left|\alpha(i,j)-\alphacos(i,j)\right| \in O_{\prob}\left(\frac{n^{2/q}}{\sqrt{p}}\right). $$
\end{thm}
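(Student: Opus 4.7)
The first observation is that the multiplicative noise cancels completely: since $\gamma_i, \gamma_j > 0$, we have
\[
\alphacos(i,j) = \frac{\gamma_i\gamma_j\langle\bX(i),\bX(j)\rangle}{\gamma_i\gamma_j\|\bX(i)\|\|\bX(j)\|} = \frac{p^{-1}\langle\bX(i),\bX(j)\rangle}{\sqrt{p^{-1}\|\bX(i)\|^2\,p^{-1}\|\bX(j)\|^2}}.
\]
This reduces the problem to showing that the right-hand side concentrates around $\alpha(i,j)$, uniformly in the pair $(i,j)$. Writing $A_{ij} \coloneqq p^{-1}\langle\bX(i),\bX(j)\rangle$ and $B_i \coloneqq p^{-1}\|\bX(i)\|^2$, so that $\mathbb{E}[A_{ij}] = \alpha(i,j)$ and $\mathbb{E}[B_i] = h(i) = 1$, the plan is to control $\max_{i\neq j}|A_{ij}-\alpha(i,j)|$ and $\max_i|B_i-1|$ separately and then combine them via a perturbation argument on the ratio.

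For the numerator deviations, the key step is to recognise that the moment-inequality machinery used in Proposition~\ref{prop:consistency_data} to bound the $\Delta_1$ term applies verbatim in this setting: conditional on nothing (since the $Z_i$ are effectively deterministic here), $A_{ij}-\alpha(i,j)$ is a normalised sum of the $\varphi$-mixing, mean-zero random variables $X_k(i)X_k(j)-\mathbb{E}[X_k(i)X_k(j)]$ across $k=1,\ldots,p$. Applying the same $\varphi$-mixing moment inequality of \citep{xuejun2010complete}, Markov's inequality and a union bound over the $\binom{n}{2}$ pairs yields $\max_{i\neq j}|A_{ij}-\alpha(i,j)| \in O_\prob(n^{2/q}/\sqrt p)$. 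The identical argument, now applied to $B_i - 1 = p^{-1}\sum_k (X_k(i)^2 - \mathbb{E}[X_k(i)^2])$ (using the assumption $\sup_{j,v}\mathbb{E}[|X_j(v)|^{2q}]<\infty$ to control the relevant moments) and a union bound over only $n$ indices, gives $\max_i|B_i-1|\in O_\prob(n^{1/q}/\sqrt p)$, which is strictly better.

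To pass from these two estimates to the claim, I would use the decomposition
\[
\frac{A_{ij}}{\sqrt{B_iB_j}} - \alpha(i,j) \;=\; \frac{A_{ij}-\alpha(i,j)}{\sqrt{B_iB_j}} \;+\; \alpha(i,j)\left(\frac{1}{\sqrt{B_iB_j}}-1\right),
\]
bound $|\alpha(i,j)|\leq \sqrt{h(i)h(j)}=1$ by Cauchy--Schwarz, and use the elementary Taylor-type estimate $|(B_iB_j)^{-1/2}-1| \lesssim |1-B_iB_j|$ valid on the event $\{\min_i B_i \geq 1/2\}$, which occurs with probability tending to one since $\max_i|B_i-1|=o_\prob(1)$. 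The factor $1-B_iB_j = (1-B_i)+B_i(1-B_j)$ is $O_\prob(n^{1/q}/\sqrt p)$ uniformly, and the first term of the decomposition inherits the $O_\prob(n^{2/q}/\sqrt p)$ rate from the numerator bound after dividing by a quantity close to $1$. Both terms being $O_\prob(n^{2/q}/\sqrt p)$ delivers the theorem.

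The only mildly delicate step is justifying the uniform-in-$(i,j)$ use of the local Taylor expansion for $(B_iB_j)^{-1/2}$; I would handle this by intersecting with the high-probability event $\{\max_i|B_i-1|\leq 1/2\}$ and absorbing the complementary event into the $O_\prob$ notation. Everything else is a routine adaptation of Proposition~\ref{prop:consistency_data}, with the simplifications that neither the additive disturbances $\bS(v)\bE_i$ nor the latent assignments $Z_i$ contribute error terms here, so only the $\Delta_1$-type analysis is needed.
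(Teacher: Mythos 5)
Your proof is correct, and it follows the same overall strategy as the paper's: cancel the multiplicative factors $\gamma_i$, concentrate the normalised inner products $A_{ij}$ and the normalised squared norms $B_i$ separately using the $\varphi$-mixing moment inequality from the proof of proposition~\ref{prop:consistency_data} (with rates $n^{2/q}/\sqrt{p}$ and $n^{1/q}/\sqrt{p}$ respectively), and then combine. The one genuine difference is the algebraic handling of the ratio. You write $A_{ij}/\sqrt{B_iB_j}-\alpha(i,j)$ as $(A_{ij}-\alpha(i,j))/\sqrt{B_iB_j}+\alpha(i,j)\bigl((B_iB_j)^{-1/2}-1\bigr)$, which forces you to lower-bound the denominator and hence to intersect with the event $\{\max_i|B_i-1|\le 1/2\}$ and absorb its complement into the $O_\prob$. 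The paper instead decomposes $\alpha(i,j)-\alphacos(i,j)$ as $\bigl(\alpha(i,j)-A_{ij}\bigr)+\bigl(A_{ij}/\sqrt{B_iB_j}\bigr)\bigl(\sqrt{B_iB_j}-1\bigr)$ and observes that the prefactor $A_{ij}/\sqrt{B_iB_j}$ is a realised cosine similarity, hence bounded by $1$ deterministically via Cauchy--Schwarz; it also controls $\bigl|p^{-1/2}\|\bX(i)\|-1\bigr|$ directly through the identity $(|a|-1)(|a|+1)=|a|^2-1$ rather than Taylor-expanding $x^{-1/2}$. The paper's route therefore never needs a lower bound on the norms or a high-probability event, which is slightly cleaner; your route is the more standard perturbation argument and works equally well, with your bound $|\alpha(i,j)|\le\sqrt{h(i)h(j)}=1$ playing the role that the deterministic cosine bound plays in the paper.
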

\begin{proof}
The main ideas of the proof are that the multiplicative factors $\gamma_i$,$\gamma_j$   in the numerator $\alphacos(i,j)$ cancel out with  those in the denominator, and combined with the condition $p^{-1}\mathbb{E}[\|\bX(v)\|^2]=1$ we may then establish that $\alpha(i,j)$ and $\alphacos(i,j)$ are probabilistically close using similar arguments to those in the proof of proposition~\ref{prop:consistency_data}.

Consider the decomposition:
\begin{align}
\alpha(i,j)-\alphacos(i,j)& = \alpha(i,j) - \frac{p^{-1}\langle \bX(i), \bX(j)\rangle}{p^{-1/2}\|\bX(i)\|p^{-1/2}\|\bX(j)\|} \nonumber\\
&= \alpha(i,j) - \frac{1}{p}\langle \bX (i), \bX(j)\rangle\\
&+ \frac{\frac{1}{p}\langle \bX(i), \bX(j)\rangle}{p^{-1/2}\|\bX(i)\|p^{-1/2}\|\bX(j)\|}\left[p^{-1/2}\|\bX(i)\|p^{-1/2}\|\bX(j)\|-1\right].
\end{align}
Applying the Cauchy-Schwartz inequality, and adding and subtracting  $p^{-1/2}\|\bX(j)\|$ and $1$ in the final term of this decomposition leads to:
\begin{align}
&\max_{i,j\in[n],i\neq j}\left|\alpha(i,j)-\alphacos(i,j)\right| \nonumber\\
&\qquad\leq \max_{i,j\in[n],i\neq j}\left|\alpha(i,j)-\frac{1}{p}\langle \bX(i), \bX(j)\rangle\right|\label{eq:cos_decomp_1} \\
&\qquad+ \max_{i\in[n]}\left|p^{-1/2}\|\bX(i)\|-1\right|\label{eq:cos_decomp_2}\\
&\qquad+ \left(1+\max_{i\in[n]}\left|p^{-1/2}\|\bX(i)\|-1\right|\right)\max_{j\in[n]}\left|p^{-1/2}\|\bX(j)\|-1\right|.\label{eq:cos_decomp_3}
\end{align}
The proof proceeds by arguing that the term \eqref{eq:cos_decomp_1} is in $O_{\prob}\left(n^{2/q}/\sqrt{p}\right)$, and the terms \eqref{eq:cos_decomp_2} and \eqref{eq:cos_decomp_3} are  in $O_{\prob}\left(n^{1/q}/\sqrt{p}\right)$, where we note that this asymptotic concerns the limit as $n^{2/q}/\sqrt{p}\to 0$.

In order to analyse the term \eqref{eq:cos_decomp_1}, let $\tilde{\prob}$ denote the probability law of the additive model for $\bY_1,\ldots,\bY_n$ in equation \eqref{eq:LMM} in section~\ref{sec:model}, in the case that $\bS(v)=0$ for all $v\in\cZ$. Let $\tilde{\mathbb{E}}$ denote the  associated  expectation. Then for any $\delta>0$ and  $i,j\in[n]$,  $i\neq j$,
\begin{align*}
&\prob\left(\left|\alpha(i,j)-\frac{1}{p}\langle \bX(i), \bX(j)\rangle\right|>\delta\right)\\
&= \tilde{\prob}\left(\left.\left|\frac{1}{p}\langle\bY_i,\bY_j\rangle-\alpha(Z_i,Z_j)\right|>\delta\right|Z_1=1,\ldots,Z_n=n\right)\\
& = \frac{\tilde{\mathbb{E}}\left[\mathbb{I}\{Z_1=1,\ldots,Z_n=n\}\tilde{\prob}\left(\left.\left|\frac{1}{p}\langle\bY_i,\bY_j\rangle-\alpha(Z_i,Z_j)\right|>\delta\right|Z_1,\ldots,Z_n\right)\right]}{\tilde{\prob}(\{Z_1=1,\ldots,Z_n=n\})}.
\end{align*}
The conditional probability on the r.h.s of the final equality can be upper bounded using  the inequality \eqref{eq:deviation_bound} in the proof  of proposition~\ref{prop:consistency_data}, and  combined with the same union bound argument used immediately after \eqref{eq:deviation_bound}, this  establishes \eqref{eq:cos_decomp_1} is in $O_{\prob}\left(n^{2/q}/\sqrt{p}\right)$ as required.

To show that \eqref{eq:cos_decomp_2} and  \eqref{eq:cos_decomp_3} are in $O_{\prob}\left(n^{1/q}/\sqrt{p}\right)$, it suffices to show that $\max_{i\in[n]}\left|p^{-1/2}\|\bX(i)\|-1\right|$ is in $O_{\prob}\left(n^{1/q}/\sqrt{p}\right)$. By re-arranging the equality: $(|a|-1)(|a|+1)=|a|^2-1$, we have 
\begin{equation}
\left|p^{-1/2}\|\bX(i)\|-1\right| \leq \left|p^{-1}\|\bX(i)\|^2-1\right| = \left| p^{-1}\sum_{j=1}^p  \Delta_j(i)\right|,\label{eq:cos_dis_proof_bound}
\end{equation}
where $\Delta_j(i)\coloneqq |X_j(i)|^2-1$. Thus under the model from section~\ref{sec:cos_dist_cons},  $p^{-1} \sum_{j=1}^p\Delta_j(i)$ is an average of $p$ mean-zero random variables, and by the same arguments as in the proof of proposition \ref{prop:consistency_data} under the mixing assumption \textbf{A\ref{ass:mixing}} and the assumption of the theorem that $\sup_{j\geq 1}\max_{v\in\cZ}\mathbb{E}[|X_j(v)|^{2q}]<\infty$, combined with a union bound, we have 
$$
\max_{i\in [n]}\left|p^{-1}\|\bX(i)\|^2-1\right| \in O_\prob(n^{1/q}/\sqrt{p}).
$$
Together with \eqref{eq:cos_dis_proof_bound} this implies $\max_{i\in[n]}\left|p^{-1/2}\|\bX(i)\|-1\right|$ is in $O_{\prob}\left(n^{1/q}/\sqrt{p}\right)$ as required, and that  completes the proof.

\end{proof}

\subsection{Limitations of our modelling assumptions and failings of dot-product affinities}
As noted in section~\ref{sec:limitations}, algorithm~\ref{alg:ip_hc} is motivated and theoretically justified by our modelling assumptions,  laid out in sections \ref{sec:model_and_alg}-\ref{sec:theory} and \ref{sec:cos_dist_cons}. If these assumptions are not well matched to data in practice, then algorithm~\ref{alg:ip_hc} may not perform well.

The proof of lemma \ref{lem:alpha_positive} shows that, as a consequence of the conditional independence and martingale-like assumptions, \textbf{A\ref{ass:conditional_indep}} and \textbf{A\ref{ass:martingale}}, $\alpha(u,v)\geq 0$ for all $u,v$. By theorems \ref{thm:consistency} and \ref{thm:cosine}, $\alphadata$, $\alphapca$ approximate $\alpha$ (at the vertices $\cZ$) under the additive model from section \ref{sec:model}, and $\alphacos$ approximates $\alpha$ under the multiplicative model from section \ref{sec:cos_dist_cons}. Therefore if in practice $\alphadata(i,j)$, $\alphapca(i,j)$ or $\alphacos(i,j)$ are found to take non-negligible negative values for some pairs $i,j$, that is an indication that our modelling assumptions may not be appropriate.

Even if the values taken by $\alphadata(i,j)$, $\alphapca(i,j)$ or $\alphacos(i,j)$ are all positive in practice, there could be other failings of our modelling assumptions. As an academic but revealing example, if instead of \eqref{eq:LMM} or \eqref{eq:multiplicative_model}, the data were to actually follow a combined additive \emph{and} multiplicative noise model, i.e.,
$$
\bY_i = \gamma_i \bX(Z_i) + \bS(Z_i)\bE_i,
$$
then in general neither $\alphadata$, $\alphapca$ nor $\alphacos$ would approximate $\alpha$ as $p\to\infty$.

\end{document}